\newtheorem{theorem}{Theorem}
\newtheorem{definition}{Definition}
\newcommand\encircle[1]{%
	\tikz[baseline=(X.base)] 
	\node (X) [draw, shape=circle, inner sep=0] {\strut #1};}
\DeclareMathOperator*{\argmin}{arg\,min}
\DeclareMathOperator*{\argmax}{arg\,max}
\newcommand{\pl}{\mathcal{P}}
\newcommand{\R}{\mathcal{R}}
\newcommand{\X}{\mathcal{X}}
\newcommand{\I}{\mathcal{I}}
\title{Bayesian Persuasion with Sequential Games}
\author{XYZ}
 \author{%
Andrea Celli\\
Politecnico di Milano\\
\texttt{andrea.celli@polimi.it}\\
\And
Stefano Coniglio\\
University of Southampton\\
\texttt{s.coniglio@soton.ac.uk}\\
\AND
Nicola Gatti\\
Politecnico di Milano\\
\texttt{nicola.gatti@polimi.it}\\
 }
\begin{document}

\maketitle

\begin{abstract}
We study an information-structure design problem (a.k.a.~persuasion) with a \emph{single} sender and \emph{multiple} receivers with actions of \emph{a priori} unknown types, independently drawn from action-specific marginal distributions.
As in the standard \emph{Bayesian persuasion} model, the sender has access to additional information regarding the action types, which she can exploit when committing to a (noisy) signaling scheme through which she sends a private signal to each receiver.
The novelty of our model is in considering the case where the receivers interact in a {\em sequential game with imperfect information}, with utilities depending on the game outcome and the realized action types.
After formalizing the notions of \emph{ex ante} and \emph{ex interim} persuasiveness (which differ in the time at which the receivers commit to following the sender's signaling scheme), we investigate the {\em continuous optimization} problem of computing a signaling scheme which maximizes the sender's expected revenue.
We show that computing an optimal \emph{ex ante} persuasive signaling scheme is \textsf{NP}-hard when there are three or more receivers.
In contrast with previous hardness results for \emph{ex interim} persuasion, we show that, for games with two receivers, an optimal \emph{ex ante} persuasive signaling scheme can be computed in polynomial time thanks to a novel algorithm based on the ellipsoid method which we propose.
\end{abstract}

\section{Introduction}\label{sec:intro}

%
%





\emph{Bayesian persuasion} revolves around influencing the behavior of self-interested agents through the provision of payoff-relevant information~\cite{kamenica2011bayesian}.
Differently from traditional \emph{mechanism design}, where the designer influences the outcome by providing tangible incentives, in Bayesian persuasion the designer influences the outcome by deciding \emph{who gets to know what}~\cite{bergemann2016information}.
Real-world
applications are ubiquitous.
For instance, this framework has been recently applied to security problems~\cite{rabinovich2015information,xu2015exploring,xu2016signaling}, financial-sector stress testing~\cite{goldstein2018stress}, voter coalition formation~\cite{alonso2016persuading}, and online advertisement~\cite{badanidiyuru2018targeting,emek2012}.

The classical Bayesian persuasion model involves a \emph{single sender} and a \emph{single receiver} where the sender, who has access to some private information, designs a signaling scheme in order to persuade the receiver to select a favorable action.
The model assumes the sender's commitment power. 
This hypothesis is realistic in many settings where reputation and credibility are a key factor for the long-term utility of the sender~\cite{rayo2010optimal}, as well as whenever an automated signaling scheme either has to abide by a contractual service agreement or it is enforced by a trusted authority~\cite{dughmi2017algorithmic}.

The extension to the case with multiple receivers is of major interest (see, e.g., its applications to private-value auctions~\cite{kaplan2000strategic}).
In this setting, most of the works assume a \emph{public signal} model in which all the receivers observe the same information~\cite{dughmi2014constrained,alonso2016persuading,dughmi2018hardness}.
A more general setting is the {\em private signal} one, in which the sender may tailor receiver-specific signals.
Persuasion with private signals has been explored only in very specific settings, such as two-agents two-action games~\cite{taneva2018}, unanimity elections~\cite{bardhi2018modes}, binary auctions with no inter-agent externalities~\cite{arieli2016private,babichenko2016computational}, and voting with binary action spaces and binary states of Nature~\cite{wang2013bayesian}.
As pointed out by~\citet{dughmi2017algorithmic}, the problem of computing private signaling schemes in multi-receivers settings still lacks a general algorithmic framework.

Differently from classical Bayesian persuasion models, which typically assume that the receivers take their actions simultaneously~\cite{dughmi2017algorithmic,kamenica2018bayesian}, we address, for the first time in the literature (to the best of our knowledge), the multi-receiver case with sequential interactions among receivers.
As most of the real-world economic interactions take place sequentially, this allows for a greater modeling flexibility which could be exploited in the context of, e.g, sequential auctions~\cite{leme2012sequential}.
In the paper, we show how to address sequential, multi-receiver settings algorithmically via the notion of \emph{ex ante} persuasive signaling scheme, where receivers commit to following the sender's recommendations by only observing the signaling scheme.
This is motivated by the fact that the classical notion of persuasiveness (\emph{ex interim} persuasiveness) which allows the receivers to deviate after observing the sender's signal renders most of the associated design problems (with the exception of very narrow settings) computationally intractable~\cite{dughmi2016algorithmic}, ultimately making its adoption impractical in real-world applications where the receivers act sequentially.

\emph{Ex ante} persuasive signaling schemes may be employed every time the environment allows for a credible receivers' commitment before the recommendations are revealed. 
As argued by~\citet{kamenica2011bayesian}, this is not unrealistic. 
On a general level, the receivers will
uphold their \emph{ex ante} commitment every time they reason with a long-term horizon where a reputation for credibility positively affects their utility~\cite{rayo2010optimal}. 
In some cases, they could also be forced to stick to their \emph{ex ante} commitment by contractual agreements. 
Many real-world problems involve \emph{ex ante} commitments. 
This is the case, for example, of sequential auction in online advertising, where a (trusted) third party service (e.g., programmatic advertising platforms) could allow bidders for coordinated behaviors during the sequential auction, leading to better outcomes in terms of bidders' payoffs, and to more efficient allocations of the ads.
%

\textbf{Original contributions}.
We investigate persuasion games with multiple receivers interacting in a sequential game, and study the continuous optimization problem of computing a private signaling scheme which maximizes the sender's expected utility.
We focus on the framework with independent action types, similarly to what is done by~\citet{dughmi2016algorithmic}.
We introduce the notion of \emph{ex ante} persuasive signaling scheme, and formalize its differences from \emph{ex interim} persuasive schemes.
%
%
Motivated by the hardness results for the \emph{ex interim} setting with simultaneous moves provided by~\citet{dughmi2016algorithmic}, we study the problem of computing optimal \emph{ex ante} signaling schemes.
We show that one such scheme may be computed in polynomial time in settings with two receivers and independent action types, which makes \emph{ex ante} persuasive signaling schemes a plausible
persuasion tool in practice. 
In proving this result, we show that, given any behavioral strategy of a perfect-recall player, it is possible to find, in polynomial time, a realization-equivalent mixed strategy with a polynomially-sized support.
%
when there are three or more receivers.
Moreover, we show that the case with two receivers is the largest one, in terms of players, in which the problem of computing an optimal \emph{ex ante} signaling scheme is tractable by showing that with three or more receivers the problem is~\textsf{NP}-hard.

\vspace{-.3cm}
\section{Bayesian Persuasion with Sequential Games}\label{sec:model}
We assume, in our model, a \emph{sender} 
denoted by $S$ and a set of \emph{receivers} $\R=\{1,\ldots,n\}$.
Each receiver $i\in \R$ is faced with the problem of selecting actions from a set $A_i$ with \emph{a priori} uncertain payoffs.
We adopt the perspective of the sender, whose goal is persuading the receivers to take actions which are favorable for her.
The fundamental feature of our model is that receivers confront themselves in a \emph{sequential} decision problem, which we describe as an \emph{extensive-form game} (EFG) with imperfect information and perfect recall. 

Payoffs are a function of the actions taken by the receivers and of an unknown \emph{state of nature} $\theta$, drawn from a set of potential realizations $\Theta$.
We follow the standard framework of~\citet{dughmi2016algorithmic} where each action $a$ has a set of possible types $\Theta_a$ and in which a state of nature $\theta$ is a vector specifying the realized type of each action of the receivers, i.e., $\theta\in\Theta=\bigtimes_{i\in\R}\bigtimes_{a\in A_i}\Theta_a$.\footnote{Standard (i.e., non Bayesian) EFGs can be represented by assigning to each $\Theta_a$ a singleton. 
Note that this model also encompasses Bayesian games \emph{\'a la}~\citet{harsanyi1967games}.
}
We assume action types which are drawn independently from action-specific marginal distributions.
We denote them by $\tilde\pi_a\in\textnormal{int}(\Delta^{|\Theta_a|})$, where $\tilde\pi_{a}(t)$ is the probability of $a$ having type $t\in\Theta_a$.\footnote{
	$\textnormal{int}(X)$ is the interior of set $X$, and $\Delta^{|X|}$ is the set of all probability distributions on $X$.
}
These marginal distributions form a common prior over the states of nature which we assume to be known explicitly to both sender and receivers.
This common knowledge can be equivalently represented by the distribution $\mu_0\in\Delta^{|\Theta|}$, where $\mu_0(\theta)=\prod_{i\in\R}\prod_{a\in A_i}\tilde\pi_a(\theta_a)$.

We now provide some background on EFGs, and describe our two models of \emph{optimal signaling}.

\subsection{Background on EFGs}\label{subsec:efg}
An EFG---here denoted by $\Gamma$---is composed of a set $H$ of nodes, each of which identified by the ordered sequence of actions leading to it from the root node.
The set of terminal nodes of the game is denoted by $Z\subseteq H$.
The game is played by the receivers $\R$.
$A_i$ is the set of actions available to each receiver $i\in\R$.
Let $A=\{A_i\}_{i\in\R}$.
For each nonterminal node $h\in H\setminus Z$, we denote by, respectively, $P(h)$ and $A(h)$ the unique receiver acting at $h$ and the set of actions available at that node.
Imperfect information is represented via \emph{information sets} (or {\em infosets}), which group together decision nodes which are indistinguishable for a certain receiver.
For each receiver $i$, we denote her set of infosets by $\I_i$.
$\I_i$ defines a partition of $\{ h \in H \mid P(h)=i \}$.
Each $I\in\I_i$ is such that $A(h)=A(h')$ $\forall h,h'\in I$.
To simplify the notation, let $A(I)$ be the set of actions available at each decision node in $I$.
Receiver $i$ has \emph{perfect recall} if she has perfect memory of her past actions and observations.

%
%
We denote a \emph{behavioral strategy} of receiver $i$ by $\pi_i$.
It corresponds to a vector defining a probability distribution over $A(I)$, $\forall I\in\I_i$.
Given $\pi_i$, let $\pi_{i,I}$ be the (sub)vector representing the probability distribution at $I\in\I_i$. 
%
%
Letting, for each receiver $i$, $\Sigma_i=\bigtimes_{I\in\mathcal{I}_i} A(I)$, a \textit{plan} is a vector $\sigma_i\in\Sigma_i $ which specifies an action for each of the receiver's infosets.
We denote by $\sigma_i(I)$ the action selected at infoset $I\in \mathcal{I}_i$. 
Letting $\Sigma=\bigtimes_{i\in \pl}\Sigma_i$, we denote by $\sigma\in\Sigma$ the tuple which specifies the plan chosen by each receiver.
%
%
%
%
Finally, a \textit{mixed strategy} $x_i$ is a probability distribution over $\Sigma_i$. 
We let $\mathcal{X}_i$ be the mixed strategy space of receiver $i$, and $\X$ be the set of joint probability distributions over $\Sigma$.

The {\em sequence form}~\cite{koller1996,von1996} of a game is a compact representation applicable to games with perfect recall.
It decomposes strategies into sequences of actions and their realization probabilities.
A sequence $q_i$ for receiver $i$ associated with a node $h$ is a tuple specifying receiver $i$'s actions on the path from the root to $h$.
We denote the set of all sequences for receiver $i$ by $Q_i$.
A sequence is said terminal if, together with some sequences of the other receivers, leads to a terminal node. 
We let $q_\emptyset$ be the fictitious sequence leading to the root node and $qa$ the extended sequence obtained by appending action $a$ to $q$. 
A \emph{sequence-form strategy} (or \emph{realization plan}) for a receiver $i$ is a function $r_i:Q_i\rightarrow [0,1]$ such that $r_i(q_\emptyset)=1$ and, for each $I\in\mathcal{I}_i$ and sequence $q$ leading to $I$, $-r_i(q)+\sum_{a\in A(I)}r_i(qa)=0$.
%
%
We denote by $Q(I)$ the set of sequences originating in $I$. 
For each $q\in Q_i$, we denote by $I_\downarrow(q)\subseteq \I_i$ the set of infosets reachable by $i$ after selecting $q$ without making other intermediate moves, whereas $I_\uparrow(q)\in\I_i$ denotes the unique infoset where the last action of $q$ was taken.
\!\footnote{When the context requires disambiguation between different games, we write $I_\downarrow^\Gamma(q)$ to denote the result for EFG $\Gamma$.}
%
%
We call two strategies of receiver $i$ \emph{realization equivalent} if, for any fixed strategy of the other receivers, they induce the same distribution over $Z$.


\subsection{\emph{Ex interim} Persuasiveness}

Let $u_S:\Sigma\times\Theta\to\mathbb{R}$ and $u_i:\Sigma\times\Theta\to\mathbb{R}$ be the payoff functions of the sender and receiver $i \in \R$.
We assume that the sender is allowed to tailor signals to individual receivers through private communications.
Let $\Omega_i$ be the set of signals available to receiver $i$, and let $\Omega=\bigtimes_{i\in\R}\Omega_i$. 
We assume that the sender has access to private information and her goal is designing a \emph{signaling scheme} $\varphi: \Theta\to\Delta^{|\Omega|}$ to persuade the receivers to select actions which are favorable for her.
We denote by $\varphi_\theta$ the probability distribution over $\Omega$ having observed $\theta$. 
%
%
In the classical Bayesian persuasion framework~\cite{kamenica2011bayesian}, the receivers decide their behavior after observing the sender's signal and updating their posterior over $\Theta$ accordingly.
The sender-receivers interaction goes as follows:
\begin{itemize}[nolistsep,itemsep=0mm,leftmargin=1cm]
	\item The sender chooses $\varphi$ and publicly discloses it.
	
	\item Nature draws a state $\theta\sim\mu_0$, observed by the sender.
	
	\item The sender draws a tuple $\omega\sim\varphi_\theta$ and privately sends signal $\omega_i$ to each receiver $i\in\R$.
	
	\item Each receiver $i$ updates her posterior distribution knowing $\varphi$ and having observed $\omega_i$. Then, each of the receivers selects a plan $\sigma_i\in\Sigma_i$. Together, their joint choices form the tuple $\sigma=(\sigma_1,\ldots,\sigma_n)$.
	
	\item Sender and receivers receive, respectively, payoffs $u_S(\theta,\sigma)$ and $u_i(\theta,\sigma)$, for all $i\in\R$.
\end{itemize}
In this setting, a result similar to the {\em revelation principle} (see, e.g.,~\cite{myerson79}) holds. 
Specifically, an \emph{optimal signaling scheme} (i.e., a signaling scheme maximizing the sender's expected utility) can always be obtained by restricting the set of signals $\Omega$ to the set of plans $\Sigma$ (see~\cite[Proposition 1]{kamenica2011bayesian}).
In the following, we assume $\Omega=\Sigma$ (i.e., the sender recommends a plan to follow to each receiver).
The receivers have an incentive to follow the sender's recommendation $\hat\sigma_i$ if the recommended plan is preferred to any other action, conditional on the knowledge of $\hat\sigma_i$. 
We call this condition \emph{ex interim persuasiveness}, which is precisely the kind of constraint characterizing a \emph{Bayes correlated equilibrium} (BCE)~\cite{bergemann2013robust,bergemann2016bayes}. We remark that, according to the definition of BCE, the signaling scheme must necessarily be defined on plans and cannot be compactly represented by using sequences or actions.
\begin{definition}[\emph{Ex interim} persuasiveness]\label{def:ex_int}
	 A signaling scheme $\varphi:\Theta\to\Delta^{|\Sigma|}$ is \emph{ex interim persuasive} if the following holds for all $i\in\R$ and $\sigma_i,\sigma_i'\in\Sigma_i$:
	 \[\hspace{-.3cm}
		 \sum_{\substack{\theta\in\Theta,\\\sigma_{-i}\in \Sigma_{-i}}}\hspace{-.3cm}\mu_0(\theta)\varphi_\theta(\sigma_i,\sigma_{-i})\Big(u_i(\theta,(\sigma_i,\sigma_{-i}))-u_i(\theta,(\sigma'_i,\sigma_{-i}))\Big)\geq 0.
	 \]
\end{definition}
\begin{definition}\label{def:bce}
	A signaling scheme $\varphi:\Theta\to\Delta^{|\Sigma|}$ is a BCE if it is \emph{ex interim} persuasive.
\end{definition}

\subsection{\emph{Ex ante} Persuasiveness}
We introduce the setting in which receivers have to decide whether to follow the sender's recommendations before actually observing them, basing their decision only on the knowledge of $\mu_0$ and $\varphi$.
\!%
\footnote{The receivers' commitment to follow a certain signaling scheme is not an unrealistic assumption for the same reason why it is realistic to assume the sender's commitment power  (see Section~\ref{sec:intro}).}
The interaction between sender and receivers goes as follows:
\begin{itemize}[nolistsep,itemsep=0mm,leftmargin=1cm]
	\item The sender computes $\varphi$, and publicly discloses it.
	
	\item The receivers decide whether to adhere to the recommendations drawn according to $\varphi$ or not.
	
	\item Nature draws a state $\theta\sim\mu_0$, observed by the sender.
	
	\item If $i\in\R$ decided to opt-in to the signaling scheme:
			\begin{itemize}
				\item[$\circ$] the sender draws $\hat \sigma_i\sim\varphi_\theta$ and privately communicates it to receiver $i$;
				
				\item[$\circ$] receiver $i$ acts according to the recommended $\hat \sigma_i$.
			\end{itemize}
	
	\item Sender and receivers receive, respectively, payoffs $u_S(\theta,\sigma)$ and $u_i(\theta,\sigma)$, $\forall i\in\R$, where 
	$\sigma_i=\hat\sigma_i$ if $i$ adhered to the scheme.
\end{itemize}
In this setting, the receivers adhere to the signaling scheme (i.e., $\sigma_i=\hat\sigma_i$) if it is \emph{ex ante} persuasive:
\begin{definition}[\emph{Ex ante} persuasiveness]\label{def:ex_ante}
	The signaling scheme $\varphi:\Theta\to\Delta^{|\Sigma|}$ is \emph{ex ante} persuasive if, for all $i\in\R$ and $\sigma_i\in\Sigma_i$, the following holds:
	%
	$$
	\hspace{-0.3cm}\sum_{\substack{\theta\in\Theta,\sigma_i'\in\Sigma_i\\\sigma_{-i}\in \Sigma_{-i}}} \hspace{-.5cm}\mu_0(\theta)\varphi_\theta(\sigma_i',\sigma_{-i})\Big(u_i(\theta,(\sigma'_i,\sigma_{-i}))-u_i(\theta,(\sigma_i,\sigma_{-i}))\Big)\geq 0.
	$$
\end{definition}
\noindent
Such constraints characterize \emph{Bayes coarse correlated equilibria} (BCCE), i.e., the generalization of coarse correlated equilibria to incomplete-information games (see~\cite{Forges1993,hartline2015no,caragiannis2015bounding}):
\!%
\footnote{The set of (non Bayesian) coarse correlated equilibria is characterized by the constraints of Definition~\ref{def:ex_ante}, with $|\Theta_a|=1$ $\forall a\in A$.}
\begin{definition}\label{def:bcce}
	A signaling scheme $\varphi:\Theta\to\Delta^{|\Sigma|}$ is a BCCE if it is \emph{ex ante} persuasive.
\end{definition}

\subsection{Comparison}

Figure~\ref{fig:line_example} summarizes the interaction flow between sender and receivers in the two settings described above.
The key difference is the time at which the receivers decide whether to adhere to the signaling scheme or not.
%
%
%
\begin{figure}[h!]
	\begin{minipage}{.6\textwidth}
		\includegraphics[scale=1.1]{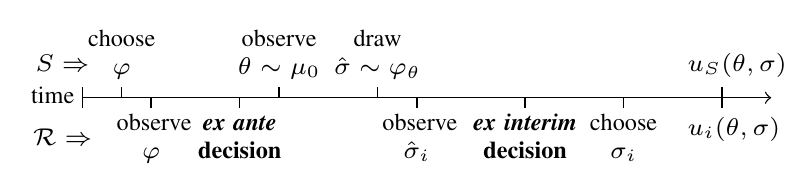}
	\end{minipage}
	\begin{minipage}{.4\textwidth}
		\vspace{-.3cm}
		\scriptsize
		\setlength{\extrarowheight}{3pt}
		\begin{tabular}{cc|c|c|c|}
			& \multicolumn{1}{c}{} & \multicolumn{1}{c}{$In$}  & \multicolumn{1}{c}{$Out$} & \multicolumn{1}{c}{$P$} \\\cline{3-5}
			\multirow{2}*{}  & $E$ & $(-1,1)$ & $(1,0)$ & $(0,1/2)$ \\\cline{3-5}
			& $H$ & $(-1,-1)$ & $(1,0)$ & $(0,0)$ \\\cline{3-5}
		\end{tabular}
	\end{minipage}
	\caption{\emph{Left:} Interaction between sender and receivers in the \emph{ex ante} and \emph{ex interim} setting. \emph{Right:} A game where \emph{ex ante} persuasion guarantees the sender a higher expected utility with respect to \emph{ex interim} persuasiveness.}
	\label{fig:line_example}
\end{figure}
We also propose the following illustrative example (in the basic single-receiver setting) to further illustrate the fundamental differences between the two notions of persuasiveness.


\textbf{Example 1.}
{\em The incumbent of an industry wants to persuade a potential new entrant to the market. 
The market can be either easy ($E$), with probability 0.3, or hard ($H$). 
The incumbent knows the state of the market. 
The entrant has three possible actions: entering the market ($In$), staying out of the market ($Out$), or proposing a partnership to the incumbent ($P$).  Figure~\ref{fig:line_example} depicts the utility matrix for the game (the first values are incumbent’s payoffs).}

{\em The incumbent wants the entrant to stay out of the market, values its entrance negatively, and is indifferent towards a partnership. 
The entrant values entering the new market positively only when it has favorable conditions. 
A partnership in a hard market gives the entrant 0 (rather than a negative score) as no fixed costs have to be sustained.
In this setting, forcing the entrant (contractually) to commit to following the incumbent’s recommendations \emph{ex ante} is strictly better (in terms of expected utility) for the incumbent.} 

{\em An optimal \emph{ex ante} signaling scheme (e.g., $\varphi_E(In)=\varphi_E(Out)=\frac{1}{2}$, $\varphi_H(Out)=1$) guarantees the sender an expected utility of $0.7$.
An optimal \emph{ex interim} signaling scheme (e.g., $\varphi_E(P)=1$, $\varphi_H(Out)=\frac{11}{14}$, $\varphi_H(P)=\frac{3}{14}$,) guarantees a sender's expected utility of 0.55.
Therefore, \emph{ex ante} persuasion provides a 27\% increase in utility for the incumbent w.r.t. ex interim persuasion.}

%
We remark that the set of \emph{ex ante} persuasive signaling schemes strictly includes the set of \emph{ex interim} signaling schemes.
In particular,
an optimal \emph{ex ante} persuasive signaling scheme may lead to an expected utility for the sender arbitrarily larger than the one she would obtain with an optimal \emph{ex interim} scheme.
%
This is shown via the following example:
%

%
%
%

\begin{wrapfigure}{R}{.5\textwidth}
	\centering
	\includegraphics[scale=1.0]{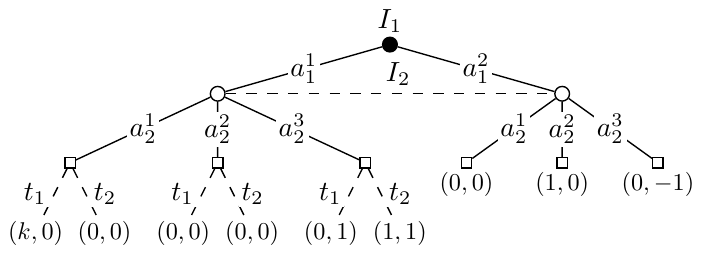}
	\caption{A game with two receivers in which action $a_1^1$ has two possible types $t_1$ and $t_2$. 
		Terminal nodes report receivers' utilities.}
	\label{fig:example}
\end{wrapfigure}
\textbf{Example 2.}
{\em Consider the game in Figure~\ref{fig:example}, with two receivers with one information set each ($I_1$ for receiver 1 and $I_2$ for receiver 2), and parametric in $k\gg 1$.
Action $a_1^1\in A_1$ is such that $\Theta_{a_1^1}=\{t_1,t_2\}$ and $\tilde\pi_{a_1^1}(t_1)=\tilde\pi_{a_1^1}(t_2)=1/2$.
The figure only reports the receivers' utilities, as we assume $u_S(\theta,\sigma)=u_1(\theta,\sigma)+u_2(\theta,\sigma)$, $\forall (\theta,\sigma)$.}
{\em 	The signaling scheme with $\varphi_{t_1}'(a_1^1,a_2^1)=1/2$, $\varphi_{t_1}'(a_1^2,a_2^2)=1/2$, and $\varphi_{t_2}'(a_1^1,a_2^3)=1$ is \emph{ex ante persuasive} but not \emph{ex interim} persuasive.
	The optimal \emph{ex interim} persuasive signaling scheme is such that $\varphi_{t_1}''(a_1^2,a_2^2)=1$, and $\varphi_{t_2}''(a_1^1,a_2^3)=1$. 
	Scheme $\varphi'$ grants the sender an expected utility of $(k+5)/4$, while scheme $\varphi''$ guarantees $3/2$.
	Therefore, for increasing values of $k$ an optimal \emph{ex ante} signaling scheme provides an arbitrarily larger utility than what can be obtained by \emph{ex interim} persuasion.}

\section{Positive Result}
In the independent-actions setting, it is known that computing an optimal \emph{ex interim} signaling scheme is \#\textsf{P}-hard even with a single receiver~\cite{dughmi2016algorithmic}.
Motivated by this negative result, we study the problem of computing an optimal (for the sender) \emph{ex ante} persuasive signaling scheme.
We denote this problem by \textsf{OPT-EA}.
It amounts to computing a coarse correlated equilibrium (CCE) for the game of complete information obtained by treating Nature as a player with a \emph{trivial} (i.e., constant everywhere) payoff function and subject to having marginal strategies constrained to be equal to $\mu_0$.

%
In contrast with the known hardness results for the \emph{ex interim} setting, we show that \textsf{OPT-EA} with $|\R|=2$ can be solved in polynomial time (see Theorem~\ref{th:positive}).
To prove our main theorem, we first show how to build, in polynomial time, \emph{small} (i.e., with a support of polynomial size) mixed strategies which are realization-equivalent to a given behavioral strategy.
Omitted proofs are presented in Appendix~\ref{sec:omitted}.


\subsection{Small Supported Mixed Strategies}\label{subsec:strategies}
Given a behavioral strategy profile $\pi^\ast_i$ for a generic perfect-recall player $i$, we show (see Theorem~\ref{th:poly_ricostruzione}) that it is always possible to compute in polynomial time some $x_i^\ast\in\X_i$ such that (i) it is realization-equivalent to $\pi^\ast_i$ and (ii) it has a support of polynomial size.\footnote{As customary, we define the support of a mixed strategy $x_i\in\X_i$ as $\textnormal{supp}(x_i)\coloneqq\{\sigma_i\in\Sigma_i|x_i(\sigma_i)>0\}$.}

For each $\sigma_i\in \Sigma_i$, let $\xi(\sigma_i)\coloneqq\{q\in Q_i|\exists I\in\I_i,\sigma_i(I)=q\}$ (i.e., the set of sequences selected with probability 1 in a realization plan equivalent to $\sigma_i$).
Analogously, $\forall\sigma=(\sigma_1,\sigma_2)\in\Sigma$ we denote by $\xi(\sigma)$ the set of tuples $(q_1,q_2)$ such that $q_1\in\xi(\sigma_1)$ and $q_2\in\xi(\sigma_2)$.
In the remainder of the section, we drop the dependency on $i$ when not strictly necessary.
We denote by $M$ an $|Q_i|\times|\Sigma_i|$ matrix where $M(q,\sigma)=1$ iff $q\in\xi(\sigma)$ and $M(q,\sigma)=0$ otherwise.
We denote by $M_{q}$ the row of $M$ specifying the plans containing $q$ in their support.
Let $r^\ast$ be the $|Q_i|$-dimensional vector representing the realization plan of player $i$ which is realization-equivalent to $\pi^\ast$.
In order to compute $x^\ast$, it is enough to find an optimal solution to LP
%
$
\max_{x\in\mathbb{R}_{\geq 0}^{|\Sigma_i|}}\quad \{\mathds{1}^{\top}x
\hspace{.4cm}\textnormal{s.t.}\quad  Mx\leq r^\ast \},
$
which we denote by \encircle{\textsf{A}}, which has a polynomial number of constraints and an exponential number of variables.

By relying on the assumption of perfect recall and proceeding by contradiction, we establish the following lemma:
\begin{restatable}{lemma}{lpRicostruzione}\label{lemma:lp_ricostruzione}
	An optimal solution $x^\ast$ to \encircle{$\mathsf{A}$} satisfies $M x^\ast=r^\ast$.
\end{restatable}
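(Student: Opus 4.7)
My approach is a proof by contradiction using the strategic structure of the LP together with Kuhn's theorem and perfect recall. The plan has four steps.

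First, I would produce a trivial lower bound on the optimal value of \encircle{\textsf{A}}. By Kuhn's theorem, the perfect-recall behavioral strategy $\pi^\ast$ admits some realization-equivalent mixed strategy $x^0\in\X_i$, which in particular satisfies $M x^0 = r^\ast$ and is a bona fide probability distribution, so $\mathds{1}^\top x^0 = 1$. Next, observe that the fictitious root sequence $q_\emptyset$ belongs to $\xi(\sigma)$ for every $\sigma\in\Sigma_i$, so $M_{q_\emptyset}=\mathds{1}^\top$ and $r^\ast(q_\emptyset)=1$; the constraint $M_{q_\emptyset}x\le r^\ast(q_\emptyset)$ therefore forces $\mathds{1}^\top x\le 1$ for any feasible $x$. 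Combining these two facts, the optimal value equals $1$ and any optimum $x^\ast$ satisfies $\mathds{1}^\top x^\ast=1$; in particular, $x^\ast$ is itself a mixed strategy.

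Second, since $x^\ast$ is a distribution over $\Sigma_i$, I can interpret the vector $r^{x^\ast}(q):=(Mx^\ast)(q)=\sum_{\sigma\in\Sigma_i} M(q,\sigma)\,x^\ast(\sigma)$ as the realization plan induced by $x^\ast$. Feasibility gives $r^{x^\ast}(q)\le r^\ast(q)$ for every $q\in Q_i$, and step one gives $r^{x^\ast}(q_\emptyset)=1=r^\ast(q_\emptyset)$. The key ingredient is to show that $r^{x^\ast}$ satisfies the realization-plan flow constraints: for each $I\in\I_i$ and each sequence $p$ leading to $I$,
\[
r^{x^\ast}(p)=\sum_{a\in A(I)} r^{x^\ast}(pa).
\]
This is where perfect recall enters: under perfect recall, any plan $\sigma\in\Sigma_i$ that contains $p$ in $\xi(\sigma)$ must reach $I$, and at $I$ it deterministically prescribes exactly one action $a\in A(I)$, so $\sigma$ contains the extended sequence $pa$ for a unique $a$. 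Hence the $\sigma$-column of $M$ satisfies $M(p,\sigma)=\sum_{a\in A(I)} M(pa,\sigma)$, and summing against $x^\ast$ gives the claimed identity.

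Third, I would argue by contradiction. Suppose $Mx^\ast\neq r^\ast$; then there is some $q^\circ\in Q_i$ with $r^{x^\ast}(q^\circ)<r^\ast(q^\circ)$. Since $r^{x^\ast}(q_\emptyset)=r^\ast(q_\emptyset)$, we have $q^\circ\neq q_\emptyset$, so let $a^\circ$ be its last action and $p^\circ$ its parent sequence, leading to $I^\circ=I_\uparrow(q^\circ)$. Applying the flow identity at $I^\circ$ to both $r^{x^\ast}$ and $r^\ast$, and using $r^{x^\ast}(p^\circ a)\le r^\ast(p^\circ a)$ for every $a\in A(I^\circ)$ with strict inequality at $a^\circ$, I obtain $r^{x^\ast}(p^\circ)<r^\ast(p^\circ)$. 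Iterating this argument along the (finite) chain of parent sequences ultimately reaches $q_\emptyset$ and yields $r^{x^\ast}(q_\emptyset)<r^\ast(q_\emptyset)=1$, contradicting $r^{x^\ast}(q_\emptyset)=1$ established in step one.

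The main obstacle is the second step: one must justify the realization-plan flow equations for $r^{x^\ast}$ from the combinatorial definition of $M$, since the LP \encircle{\textsf{A}} itself only encodes the inequalities $Mx\le r^\ast$. The perfect-recall argument above is what makes these inequalities behave as if they were the standard sequence-form flow constraints, and is the true engine of the lemma.
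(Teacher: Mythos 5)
Your proposal is correct and follows essentially the same route as the paper's proof: use Kuhn's theorem (perfect recall) to fix the optimal value of \encircle{$\mathsf{A}$} at $1$ so that any optimum is a mixed strategy, observe that $Mx^\ast$ is then a valid realization plan, and derive a contradiction with feasibility from the sequence-form flow constraints when $Mx^\ast < r^\ast$ somewhere. Your write-up is in fact more explicit than the paper's (the columnwise identity $M(p,\sigma)=\sum_{a\in A(I)}M(pa,\sigma)$ and the upward propagation of the deficit to $q_\emptyset$ are asserted rather than spelled out there), but the underlying argument is the same.
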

%

We now characterize an optimal solution to \encircle{$\mathsf{A}$} by  two properties which are proven by using Lemma~\ref{lemma:lp_ricostruzione} and the fact that, as LP \encircle{$\mathsf{A}$} contains a polynomial number of constraints, it admits an optimal basic solution with only a polynomial number of nonzero variables:
\begin{restatable}{theorem}{thRicostruzione}\label{th:ricostruzione}
	These two properties hold:
	(i) An optimal solution $x^\ast$ to \encircle{$\mathsf{A}$} is a normal-form strategy ($x^\ast\in\X_i$) realization equivalent to $r^\ast$,
	(ii) there exists an optimal solution $x^\ast$ with $\textnormal{supp}(x^\ast)$ of polynomial size.
      \end{restatable}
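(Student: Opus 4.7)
The plan is to leverage Lemma~\ref{lemma:lp_ricostruzione} together with a standard LP basic-feasibility argument; the structure of $M$ will do most of the work. Two facts about $M$ are key: the row indexed by the fictitious sequence $q_\emptyset$ is identically $\mathds{1}^\top$ (since $q_\emptyset$ has realization probability $1$ under every plan $\sigma\in\Sigma_i$), and the row $M_q$ for any $q\in Q_i$ picks out exactly the plans whose realization-equivalent plan assigns probability $1$ to $q$. Hence, for every $x\in\mathbb{R}_{\geq 0}^{|\Sigma_i|}$, the quantity $(Mx)_q$ coincides with the realization probability of $q$ under the mixed strategy $x$.

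For part (i), I would start from Lemma~\ref{lemma:lp_ricostruzione}, which states $Mx^\ast=r^\ast$. Evaluating this identity at the row $q=q_\emptyset$ gives $\sum_{\sigma\in\Sigma_i} x^\ast(\sigma)=r^\ast(q_\emptyset)=1$, which combined with $x^\ast\geq 0$ shows that $x^\ast\in\X_i$. Evaluating it at a generic $q\in Q_i$ shows, by the observation above, that the realization plan induced by $x^\ast$ coincides with $r^\ast$ on every sequence. Since $r^\ast$ is itself realization-equivalent to $\pi^\ast$ by construction, so is $x^\ast$; this establishes (i).

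For part (ii), I would apply the fundamental theorem of linear programming to \encircle{\textsf{A}}. The LP is feasible (take $x=0$) and its objective $\mathds{1}^\top x$ is bounded above by $1$ (from the $q_\emptyset$ constraint $\sum_\sigma x(\sigma)\leq r^\ast(q_\emptyset)=1$), so it admits an optimal basic feasible solution. Since \encircle{\textsf{A}} has only $|Q_i|$ linear constraints besides the non-negativity bounds, any such basic solution has at most $|Q_i|$ strictly positive coordinates. As $|Q_i|$ is polynomial in the size of $\Gamma$, this yields an optimal $x^\ast$ with $|\textnormal{supp}(x^\ast)|\leq|Q_i|$, proving (ii).

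The main obstacle is really encapsulated in Lemma~\ref{lemma:lp_ricostruzione}, which upgrades the inequality $Mx^\ast\leq r^\ast$ to the equality $Mx^\ast=r^\ast$; once that equality is available, both parts are short and essentially mechanical, with (i) amounting to two row-reads of the identity and (ii) being the standard vertex-of-the-polyhedron argument.
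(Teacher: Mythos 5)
Your proposal is correct and follows essentially the same route as the paper: part (i) reads off $\mathds{1}^\top x^\ast=1$ and realization equivalence from the equality $Mx^\ast=r^\ast$ of Lemma~\ref{lemma:lp_ricostruzione}, and part (ii) invokes the existence of an optimal basic feasible solution with at most $|Q_i|$ nonzero entries, exactly as the paper does (the paper cites a classical result where you argue it directly via the fundamental theorem of linear programming, which is the same content).
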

      %
%

Let $\mathcal{D}$ be the dual of problem \encircle{$\mathsf{A}$}.
By showing that an optimal plan corresponding to a violated dual constraint can be found in polynomial time by backward induction, we establish the following:
\begin{restatable}{lemma}{lemmaDual}\label{lemma:dual_ricostruzione}
	$\mathcal{D}$ admits a polynomial-time separation oracle.
\end{restatable}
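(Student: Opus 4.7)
The plan is to write the dual $\mathcal{D}$ explicitly, recast separation as minimizing the dual-constraint slack over pure plans $\sigma\in\Sigma_i$, and compute this minimum by a backward-induction DP on the infoset forest of receiver $i$.

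Dualizing \encircle{\textsf{A}} in the standard way yields
\[
\mathcal{D}:\quad \min\,(r^\ast)^\top y \quad \text{s.t.}\quad \sum_{q\in\xi(\sigma)} y_q \ge 1 \;\; \forall \sigma\in\Sigma_i,\; y\ge 0,
\]
with polynomially many variables $y_q$, one per sequence in $Q_i$, and one constraint per plan $\sigma\in\Sigma_i$. Given a candidate $\bar y\ge 0$, the separation problem therefore reduces to computing $\sigma^\ast\in\argmin_{\sigma\in\Sigma_i}\sum_{q\in\xi(\sigma)}\bar y_q$ and returning the corresponding dual constraint whenever the minimum is strictly less than $1$. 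Unpacking the definition, $\xi(\sigma)$ is the set of sequences whose realization probability under $\sigma$ equals $1$: concretely, $q=a_1\cdots a_k\in\xi(\sigma)$ iff every action $a_j$ coincides with the action that $\sigma$ selects at the infoset in which $a_j$ is taken.

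Next, I exploit perfect recall to organize $\I_i$ into a forest: every $I\in\I_i$ has a unique incoming sequence $q_I\in Q_i$, and every action $a\in A(I)$ determines the (possibly empty) set $I_\downarrow(q_I\cdot a)\subseteq\I_i$ of immediately-following infosets of $i$. Consequently, $\sum_{q\in\xi(\sigma)}\bar y_q$ decomposes additively over the subtrees that $\sigma$ enters at each infoset, which yields the DP
\[
V(I)\;=\;\min_{a\in A(I)}\Bigl(\bar y_{q_I\cdot a}+\sum_{I'\in I_\downarrow(q_I\cdot a)} V(I')\Bigr),
\]
with a stored argmin $a^\star(I)$. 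Computing $V$ in reverse topological order of the forest takes time polynomial in $|\Gamma|$. A minimizer $\sigma^\ast$ is recovered by setting $\sigma^\ast(I)=a^\star(I)$ at infosets reached from the roots under these choices and arbitrarily elsewhere (unreachable infosets contribute nothing to $\xi(\sigma^\ast)$), and the minimum value equals $\bar y_{q_\emptyset}+\sum_{I\,:\,q_I=q_\emptyset} V(I)$; comparing this value to $1$ either certifies feasibility of $\bar y$ or returns the violated dual constraint indexed by $\sigma^\ast$.

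The only non-routine point in the argument is the correctness of the additive decomposition underlying the DP. Without perfect recall, the sequences consistent with $\sigma$ need not form a sub-forest inside a tree of infosets---distinct descendants of $i$ could share ancestor actions in mutually incompatible ways, destroying the independence of the subproblems solved by backward induction. Perfect recall is exactly what guarantees the forest structure on $\I_i$ and, consequently, the correctness of the DP.
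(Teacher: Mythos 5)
Your proposal is correct and follows essentially the same route as the paper: write the dual with one constraint per plan, reduce separation to $\min_{\sigma\in\Sigma_i}\sum_{q\in\xi(\sigma)}\bar y_q$, and solve it by backward induction over receiver $i$'s infosets (the paper's recursion $w_I=\min_{q\in Q(I)}\{\sum_{I'\in I_\downarrow(q)}w_{I'}+\bar\alpha(q)\}$ is exactly your $V(I)$), using perfect recall for the forest decomposition. No gaps worth noting.
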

Next, by relying on Lemma~\ref{lemma:dual_ricostruzione} and on the ellipsoid method we prove a result which is the basis for our main theorem, Theorem~\ref{th:positive} (whose statement and proof are given in full in the next subsection):
\begin{restatable}{theorem}{thPolyRec}\label{th:poly_ricostruzione}
	Given an EFG, a perfect-recall player $i$, and a behavioral strategy profile $\pi^\ast$ for $i$ (with the realization-equivalent realization plan $r^\ast$), a solution to LP \encircle{$\mathsf{A}$} can be found in polynomial time.
\end{restatable}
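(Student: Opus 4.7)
The plan is to invoke the equivalence between polynomial-time separation and polynomial-time optimization (Grötschel--Lovász--Schrijver) applied to the dual $\mathcal{D}$. Observe that the primal LP \encircle{$\mathsf{A}$} has $|Q_i|$ constraints (polynomially many in the size of the EFG) and $|\Sigma_i|$ nonnegative variables (exponentially many). Dualizing yields $\mathcal{D}$, which has $|Q_i|$ variables and $|\Sigma_i|$ constraints, i.e., polynomially many variables and exponentially many constraints. By Lemma~\ref{lemma:dual_ricostruzione}, $\mathcal{D}$ admits a polynomial-time separation oracle, so the ellipsoid method finds an optimal dual solution $y^\ast$ in polynomial time, provided the requisite technical conditions (bounded encoding size of an optimal solution, ball containment, etc.) are met; these are standard and follow from the $0$/$1$ nature of $M$ and from the bit-length of $r^\ast$, which is polynomial in the input because $\pi^\ast$ is given explicitly and $r^\ast$ is obtained by the usual sequence-form product along the path to each sequence.

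Next, I would extract from the ellipsoid run the polynomial collection $\Sigma'\subseteq \Sigma_i$ of plans corresponding to the dual constraints that were returned by the separation oracle (at most one per iteration, so at most polynomially many). Let \encircle{$\mathsf{A'}$} be the restricted primal obtained from \encircle{$\mathsf{A}$} by keeping only the variables $x(\sigma)$ with $\sigma\in\Sigma'$, and let $\mathcal{D}'$ be its dual, i.e., $\mathcal{D}$ restricted to the constraints indexed by $\Sigma'$. Since the ellipsoid method certified optimality of $y^\ast$ for $\mathcal{D}$ using only these constraints, $y^\ast$ is also optimal for $\mathcal{D}'$, and the two dual problems share the same optimal value. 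Because \encircle{$\mathsf{A'}$} has only a polynomial number of variables and constraints, it can be solved exactly in polynomial time by any standard LP algorithm, yielding an optimal $\tilde x$ supported on $\Sigma'$ whose objective value equals $\mathds{1}^\top y^\ast r^\ast$ (i.e., the optimum of \encircle{$\mathsf{A}$}) by strong duality.

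Finally, extending $\tilde x$ to a vector $x^\ast\in\mathbb{R}_{\geq 0}^{|\Sigma_i|}$ by setting $x^\ast(\sigma)=0$ for $\sigma\notin\Sigma'$ produces a feasible solution of \encircle{$\mathsf{A}$} (feasibility of $\tilde x$ for \encircle{$\mathsf{A'}$} directly gives $Mx^\ast\leq r^\ast$) whose objective value matches the optimum; hence $x^\ast$ is optimal for \encircle{$\mathsf{A}$}. Combined with Theorem~\ref{th:ricostruzione}, $x^\ast$ is a mixed strategy in $\X_i$ realization-equivalent to $\pi^\ast$ and supported on a polynomial number of plans, and the entire procedure runs in polynomial time.

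The main obstacle I anticipate is the technical bookkeeping needed to apply the ellipsoid machinery rigorously: verifying that $\mathcal{D}$ satisfies the boundedness and full-dimensionality assumptions that guarantee a polynomial run of the ellipsoid method, and that the list of separating constraints returned during the run is polynomial and suffices to certify optimality of the restricted dual $\mathcal{D}'$. Everything else (passing from a dual separation oracle to a polynomially-sized restricted primal, then solving it in polynomial time) is standard LP duality.
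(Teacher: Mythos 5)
Your proposal is correct and follows essentially the same route as the paper: use the polynomial-time separation oracle of Lemma~\ref{lemma:dual_ricostruzione} to solve the dual $\mathcal{D}$ with the ellipsoid method, then recover an optimal primal solution of \encircle{$\mathsf{A}$}. Your explicit recovery step---restricting \encircle{$\mathsf{A}$} to the polynomially many plans produced by the separation oracle during the run, solving the resulting small LP, and zero-padding---is just the standard rigorous instantiation of what the paper asserts in one line (``the ellipsoid method solves a primal-dual system encompassing both $\mathcal{D}$ and \encircle{$\mathsf{A}$}''), so no genuinely different idea is involved.
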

%
Finally, we show that we can efficiently compute a solution with support size of at most $|Q_i|$ by applying the ellipsoid method for at most a polynomial number of iterations:
\begin{restatable}{corollary}{corPolySol}\label{corollary:poly_sol}
		A \emph{basic feasible solution} to \encircle{$\mathsf{A}$} can be computed in polynomial time.
\end{restatable}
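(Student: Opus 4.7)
The plan is to derive Corollary~\ref{corollary:poly_sol} from Theorem~\ref{th:poly_ricostruzione} by combining a run of the ellipsoid method on $\mathcal{D}$ with the standard equivalence between separation and optimization, and then purifying to obtain an optimal vertex. The key observation is that although \encircle{$\mathsf{A}$} has exponentially many variables, its number of nontrivial constraints is $|Q_i|$, so any basic feasible solution automatically has support bounded by $|Q_i|$; the task reduces to algorithmically producing one such vertex in polynomial time.

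First, I would invoke the ellipsoid method on $\mathcal{D}$, using the polynomial-time separation oracle furnished by Lemma~\ref{lemma:dual_ricostruzione}. Since the number of iterations is bounded by a polynomial $p$ in the input size of the EFG, the oracle is queried at most $p$ times, each query returning a single violated dual constraint, i.e., a single plan $\sigma\in\Sigma_i$. I would collect these plans into a polynomially-sized set $S\subseteq\Sigma_i$.

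Next, I would consider the restricted primal obtained from \encircle{$\mathsf{A}$} by setting $x(\sigma)=0$ for every $\sigma\notin S$. Its dual uses precisely the constraints that ellipsoid examined while certifying the optimum of $\mathcal{D}$, so the restricted dual attains the same optimum as $\mathcal{D}$, and by LP duality the restricted primal attains the same optimum as \encircle{$\mathsf{A}$}. The restricted primal is a polynomial-size LP, so I would compute an optimal basic feasible solution $\bar x$ to it in polynomial time using any strongly or weakly polynomial LP algorithm coupled with a purification routine (e.g., interior-point followed by Megiddo's crossover). Padding $\bar x$ with zeros on $\Sigma_i\setminus S$ yields a feasible solution to \encircle{$\mathsf{A}$} of the same objective value, hence optimal; and because \encircle{$\mathsf{A}$} has only $|Q_i|$ nontrivial constraints, the resulting vector is a basic feasible solution with support of cardinality at most $|Q_i|$. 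Combined with Theorem~\ref{th:ricostruzione}, this produces the small-support mixed strategy realization-equivalent to $r^\ast$ that is used in the proof of Theorem~\ref{th:positive}.

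The main subtlety will be arguing that the set $S$ of constraints actually inspected by the ellipsoid method is enough to preserve the optimum of $\mathcal{D}$ when all other constraints are dropped; this is a standard Gr\"otschel--Lov\'asz--Schrijver argument, since ellipsoid's optimality certificate relies only on the inequalities it has probed, so removing the others can neither decrease nor increase the dual optimum. An equivalent and more black-box route is to quote directly the equivalence between separation and optimization, which yields in one shot an optimal vertex of the feasible polyhedron of \encircle{$\mathsf{A}$} in polynomial time, immediately implying the corollary.
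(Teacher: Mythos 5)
Your proposal is correct, but it reaches the corollary by a genuinely different route than the paper. You use the standard Gr\"otschel--Lov\'asz--Schrijver column-generation argument: the polynomially many plans returned by the separation oracle during the ellipsoid run on $\mathcal{D}$ define a polynomial-size restricted primal with the same optimum, and you then compute an optimal vertex of that explicit LP with any polynomial-time LP solver plus a crossover/purification step, padding with zeros afterwards. The paper instead stays with the solution $x^\ast$ already produced by the primal-dual ellipsoid machinery (which has polynomial support because one primal variable is introduced per iteration) and, if $x^\ast$ is not already a vertex, purifies it by an iterative face-restriction scheme: it fixes the objective value via $\mathds{1}^\top x=\mathds{1}^\top x^\ast$, reoptimizes with objective $e_j x$, adds $e_j x=e_j x^\ast$, and repeats, shrinking the dimension of the optimal face until a basic solution is reached. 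Your approach buys a more black-box argument (no bespoke purification loop; the support bound $\leq|Q_i|$ falls out of basicness of the small LP), while the paper's stays entirely within its own formulation and does not need to argue that the restricted LP preserves the optimum. One point in your write-up deserves a cleaner justification: the claim that the zero-padded vector is a basic feasible solution of \encircle{$\mathsf{A}$} does not follow merely from \encircle{$\mathsf{A}$} having only $|Q_i|$ nontrivial constraints; the right reason is that a vertex of the column-restricted LP has support whose columns of $M$ (together with the active slack structure) are linearly independent, and this linear independence is inherited verbatim in the full system, so the padded point is a vertex of \encircle{$\mathsf{A}$}. That is a standard fact, so it is a presentational gap rather than a flaw, but as stated your sentence conflates the support bound (a consequence of basicness) with the proof of basicness itself.
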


\subsection{Optimal \emph{Ex Ante} Persuasive Schemes}\label{subsec:bcce_positive}
Computing an \emph{ex ante} persuasive signaling scheme is equivalent to computing a CCE for an EFG of complete information where Nature is treated as a player with constant utility and marginal strategies constrained to be equal to $\mu_0$.
We focus on the setting where $|\R|=2$ and show that \textsf{OPT-EA} can be solved in polynomial time.
%
%
We reason over an auxiliary game such that each action of the receivers is followed by one of Nature's nodes, determining its type.
%
%
Marginal probabilities $\tilde\pi$ determining action types are treated as behavioral strategies of the Nature player, which we denote by $\mathsf{N}$. 
Formally:

\begin{definition}\label{def:auxgame}
	Given an EFG $\Gamma$ describing the interaction between receivers and a set of marginal distributions $\{\tilde\pi_a\in\textnormal{int}(\Delta^{|\Theta_a|})\}_{a\in A}$, the auxiliary game $\hat \Gamma$ is an EFG such that:
	\begin{itemize}[nolistsep,itemsep=0mm,leftmargin=1cm]
		\item It has a set of players $\R\cup\{\mathsf{N}\}$.
		
		\item For each receiver $i\in\R$, her utility function is the same as in $\Gamma$, i.e., $\forall (\theta,\sigma)\in\Theta\times\Sigma$, $u_i(\theta,\sigma)=\hat u_i(\theta,\sigma)$. 
		Nature has $\hat u_\mathsf{N}(\cdot)=k\in\mathbb{R}$ constant everywhere.
		
		\item The receivers have the same information structures as in $\Gamma$, i.e.,  $\forall i\in\R$, $\I_i=\hat \I_i$, and $\forall q\in Q_i$, $I_\downarrow^\Gamma(q)=I_\downarrow^{\hat\Gamma}(q)$.
		
		\item $\forall i\in\R$, each $a\in A_i$ is immediately followed by a singleton infoset $I\in\I_\mathsf{N}$ such that $A(I)=\Theta_a$.
		
		\item $\forall I\in\I_\mathsf{N}$, with $I$ following $a\in A$, $\mathsf{N}$ selects actions (types) at $I$ according to the marginal distribution $\tilde\pi_a$.
	\end{itemize}
\end{definition}

The first step is devising an LP to compute a BCCE with a polynomial number of constraints and an exponential number of variables.
We do so by providing an LP to find an optimal CCE over $\hat \Gamma$.
%
%
First, notice that $\theta$ is a plan of player $\mathsf{N}$ in $\hat\Gamma$.
A distribution in $\Delta^{|\Theta|}$ is a mixed strategy of $\mathsf{N}$. 
Denote by $\mu^\ast$ the mixed strategy realization equivalent to $\tilde\pi$ computed (in poly-time) as in the proof of Theorem~\ref{th:poly_ricostruzione}.
Let $\Theta^\ast\coloneqq\textnormal{supp}(\mu^\ast)$.
Due to Corollary~\ref{corollary:poly_sol}, the set $\Theta^\ast$ has polynomial size.
%
%
Then, we write the problem as a function of $\gamma\in\Delta^{|\Sigma\times\Theta^\ast|}$ (i.e., we look for a correlated distribution in $\hat\Gamma$, encompassing the Nature player).
Let $v_i$ be the $|\I_i|$-dimensional vector of variables of the dual of the best-response problem for receiver $i$ in sequence form.
Moreover, we employ sparse $(|\R|+1)$-dimensional matrices describing the utility function of sender and receivers for the profiles $(\theta,q_1,q_2)$ leading to terminal nodes of $\hat\Gamma$.
We denote them by $U_i\in\mathbb{R}^{|\Theta^\ast|\times |Q_1|\times |Q_2|}$, with $i \in \R \cup \{S\}$.
\!%
\footnote{$U_i$ employs both the sequence form (for receivers), and plans of \textsf{N}.
	However, polynomiality of $\Theta^\ast$ implies polynomiality of $U_i$.}
In the following, we let $q=(q_1,q_2)$.
The problem of computing a CCE over $\hat\Gamma$ reads:

\hspace{-.4cm}
\begin{minipage}{.99\linewidth}
	\scriptsize
	\begin{align}
	\max_{\substack{\gamma\geq 0,\\v_1,v_2}}&\sum_{\substack{\theta\in\Theta^\ast\\\sigma\in\Sigma}}\gamma(\theta,\sigma)\sum_{q\in\xi(\sigma)}U_S(\theta,q)\label{eq:cce2_fo}&\\
	\textnormal{s.t.}\quad& \sum_{\substack{\theta\in\Theta^\ast\\\sigma\in\Sigma}}\gamma(\theta,\sigma)\sum_{q\in\xi(\sigma)}U_i(\theta,q)\geq \hspace{-.3cm}\sum_{\substack{I'\in\I_i:\\I'\in I_\downarrow(q_{\emptyset})}}\hspace{-.3cm}v_i(I') \hspace{.45cm}\forall i\in\R \label{eq:cce2_util}&\\
	& v_1(I_\uparrow(q_1))-\hspace{-.3cm}\sum_{I'\in I_\downarrow(q_1)}\hspace{-.3cm}v_1(I')-\hspace{-.1cm}\sum_{\substack{\theta\in\Theta^\ast\\\hspace{-.15cm}\sigma\in\Sigma}}\hspace{-.1cm}\gamma(\theta,\sigma)\hspace{-.3cm}\sum_{q_2\in\xi(\sigma_2)}\hspace{-.3cm}U_1(\theta,q_1,q_2)\geq 0&\forall q_1\in Q_1&\label{eq:cce2_ic1}\\
	& v_2(I_\uparrow(q_2))-\hspace{-.3cm}\sum_{I'\in I_\downarrow(q_2)}\hspace{-.3cm}v_2(I')-\hspace{-.1cm}\sum_{\substack{\theta\in\Theta^\ast\\\hspace{-.15cm}\sigma\in\Sigma}}\hspace{-.1cm}\gamma(\theta,\sigma)\hspace{-.3cm}\sum_{q_1\in\xi(\sigma_1)}\hspace{-.3cm}U_2(\theta,q_1,q_2)\geq 0&\forall q_2\in Q_2&\label{eq:cce2_ic2}\\
	&\sum_{\sigma\in\Sigma}\gamma(\theta,\sigma)=\mu^\ast(\theta)&\forall\theta\in\Theta^\ast\label{eq:cce2_nature}.
	\end{align}
\end{minipage}


\begin{wrapfigure}{r}{8cm}
	\vspace{.cm}
	\centering
	\includegraphics[width=8cm]{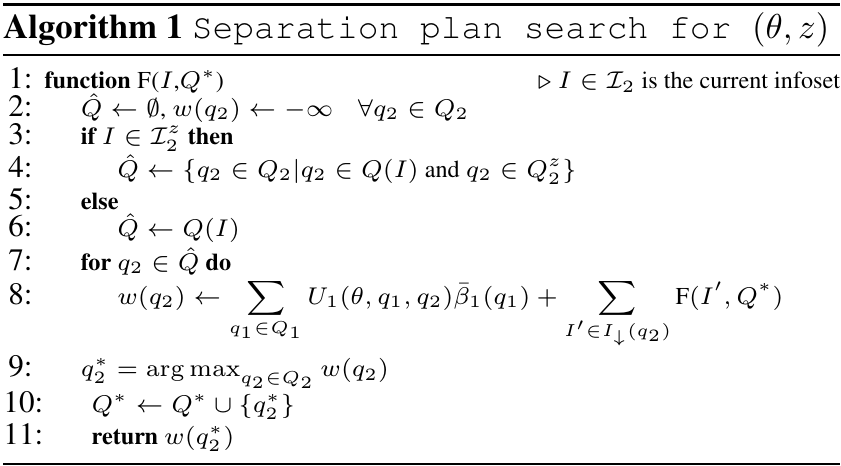}
\end{wrapfigure}

We make the following observations on the above LP, which we denote by \encircle{\textsf{B}}:
\begin{itemize}[nolistsep,itemsep=0mm, leftmargin=.5cm]
	\item The left term of constr.~\eqref{eq:cce2_util} is the expected utility of $i$ at the equilibrium.
	Incentive constraints~\eqref{eq:cce2_ic1} and~\eqref{eq:cce2_ic2} are compactly encoded by exploiting the sequence form. 
	Intuitively, we decompose the best-response problem locally at each infoset.
	The constraints impose that the utility at the equilibrium be no smaller than the value achieved when playing the plan obtained by letting the receiver best respond at each infoset.
\end{itemize}

\begin{itemize}[nolistsep,itemsep=0mm, leftmargin=.5cm]
	\item Constraint~\eqref{eq:cce2_nature} forces Nature's marginal distribution to equal the prior $\mu^\ast$.
	
	\item Once a solution $\gamma^\ast$ to \encircle{\textsf{B}} has been computed, an optimal solution to \textsf{OPT-EA} is the signaling scheme which, having observed $\theta$, recommends $\sigma$ with probability 
	$\gamma^\ast(\theta,\sigma)/\mu^\ast(\theta)$.
\end{itemize}

%

%
The following key positive result holds:
\begin{theorem}\label{th:positive}
	\textsf{OPT-EA} can be solved in polynomial time when $|\R|\leq 2$.
\end{theorem}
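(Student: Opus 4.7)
My plan is to solve LP \encircle{\textsf{B}} in polynomial time, after which the signaling scheme $\varphi^\ast_\theta(\sigma) := \gamma^\ast(\theta,\sigma)/\mu^\ast(\theta)$ is optimally ex ante persuasive by construction (with $\mu^\ast$ computed in polynomial time via Corollary~\ref{corollary:poly_sol}, which guarantees $|\Theta^\ast|$ is polynomial and thereby makes $\hat\Gamma$ polynomial-sized). First I would tally sizes: \encircle{\textsf{B}} has $2+|Q_1|+|Q_2|+|\Theta^\ast|$ constraints, all polynomial, but its $\gamma$-variables number $|\Theta^\ast|\cdot|\Sigma_1|\cdot|\Sigma_2|$, which is exponential since $|\Sigma_i|$ grows as $\prod_{I\in\I_i}|A(I)|$. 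I would therefore solve \encircle{\textsf{B}} by applying the ellipsoid method to its dual, which has polynomially many variables (one per primal constraint) and exponentially many constraints (one per primal variable).

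The technical heart of the proof is designing a polynomial-time separation oracle for this dual. Constraints dual to $v_1,v_2$ are polynomial in number and checked directly. For the $\gamma$-dual constraints, I would iterate over $\theta\in\Theta^\ast$ and, for each $\theta$, search for a plan profile $\sigma=(\sigma_1,\sigma_2)$ whose dual constraint is most violated, or certify that none exists. The left-hand side of such a constraint decomposes into (i) single-receiver terms involving $\xi(\sigma_i)$ arising from~\eqref{eq:cce2_ic1}--\eqref{eq:cce2_ic2}, and (ii) joint-reach terms $\sum_{q\in\xi(\sigma)}U_i(\theta,q)$ that are bilinear in $\xi(\sigma_1)\times\xi(\sigma_2)$ and arise from~\eqref{eq:cce2_fo}--\eqref{eq:cce2_util}. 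Since $U_i$ and $U_S$ are supported only on the polynomially many terminal sequence pairs of $\hat\Gamma$, and $\hat\Gamma$ has exactly two receivers with perfect recall, the oracle subproblem can be recast as a polynomial-size LP over joint reach-probability variables $\rho(q_1,q_2)$; the feasible set of such $\rho$'s admits a compact sequence-form-style description precisely because $|\R|=2$, and its extreme points are pure plan profiles, so the relaxation is tight and a violated pure $\sigma$ can be read off.

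Once the dual is solved in polynomial time, standard LP techniques (analogous to those used to obtain Corollary~\ref{corollary:poly_sol}) recover an optimal basic primal $\gamma^\ast$ supported on at most $2+|Q_1|+|Q_2|+|\Theta^\ast|$ plans, from which $\varphi^\ast$ follows directly. The main obstacle I anticipate is certifying the correctness and polynomial runtime of the separation oracle: the reformulation via joint reach probabilities $\rho$ critically requires $|\R|=2$, so that the associated polytope has a polynomial description and the oracle reduces to a polynomial LP over $\hat\Gamma$. With three or more receivers this polytope blows up exponentially, which is the structural reason why the proof does not extend and is consistent with the \textsf{NP}-hardness result announced for $|\R|\geq 3$.
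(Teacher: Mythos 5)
Your overall architecture coincides with the paper's: dualize \encircle{\textsf{B}}, run the ellipsoid method with a polynomial-time separation oracle (the constraints dual to $v_1,v_2$ being polynomially many and trivially checkable), and recover a small-support optimal $\gamma^\ast$ from which $\varphi^\ast_\theta(\sigma)=\gamma^\ast(\theta,\sigma)/\mu^\ast(\theta)$ is built. Where you diverge is inside the oracle, and that is where your argument has a genuine gap. For each fixed $\theta\in\Theta^\ast$ you propose to linearize the bilinear terms via joint reach variables $\rho(q_1,q_2)$ and to optimize over a ``compact sequence-form-style'' polytope whose optimum is claimed to be attained at pure plan profiles ``precisely because $|\R|=2$''. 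That claim is the entire technical content of your oracle, and you neither prove it nor cite a result that delivers it. Moreover, two receivers alone do not suffice: for two-player perfect-recall games \emph{with chance moves} the natural polynomial system of marginalization/consistency constraints on $\rho$ is in general a strict relaxation of the convex hull of products of pure realization plans (this is exactly the regime in which correlation-based optimization becomes hard, cf.\ von Stengel and Forges). In your setting the claim can be rescued only because you have already fixed Nature's entire plan $\theta$, so the residual two-receiver game is chance-free; you would then need to invoke (or reprove) the von Stengel--Forges characterization of the correlation-plan polytope for two-player, perfect-recall, chance-free games, together with an argument that an optimal basic solution is a pure correlation plan from which a violated dual constraint indexed by an actual pair $(\theta,\sigma)$ can be extracted. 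None of this appears in your write-up, so the correctness and polynomiality of the oracle are unsupported as written.

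For comparison, the paper's oracle avoids any polytope-exactness argument: it enumerates the polynomially many pairs $(\theta,z)$ with $z$ a terminal node of $\hat\Gamma$; once $(\theta,z)$ is fixed, the joint bilinear term $\sum_{q\in\xi(\sigma)}\big[\sum_{i\in\R}U_i(\theta,q)-U_S(\theta,q)\big]$ and the term $\bar\delta(\theta)/\mu^\ast(\theta)$ are constants, and the remaining terms, e.g.\ $\sum_{q_1\in Q_1}\sum_{q_2\in\xi(\sigma_2)}U_1(\theta,q_1,q_2)\bar\beta_1(q_1)$, decouple across the two receivers and are each maximized over plans consistent with reaching $z$ by a simple traversal of the infosets (Algorithm 1). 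If you wish to keep your LP-based oracle, make explicit that fixing $\theta$ removes chance and cite the correlation-plan characterization; otherwise the terminal-node enumeration is the more elementary, self-contained route, and it also makes transparent why the argument stops at two receivers (with a third receiver the weighted terms no longer decouple after fixing a terminal node, consistently with the \textsf{NP}-hardness result).
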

\begin{proof}
	%
	%
	Let $\mathcal{D}_B$ be the dual of \encircle{\textsf{B}}.
	Let $\alpha_1$, $\alpha_2$ be the dual variables of constraints~\eqref{eq:cce2_util}, $\beta_1\in\mathbb{R}^{|Q_1|}$ and $\beta_2\in\mathbb{R}^{|Q_2|}$ the dual variables of~\eqref{eq:cce2_ic1} and~\eqref{eq:cce2_ic2}, and $\delta\in\mathbb{R}^{|\Theta^\ast|}$ the dual variables of~\eqref{eq:cce2_nature}.
	We show that, given $(\bar\alpha_1,\bar\alpha_2,\bar\beta_1,\bar\beta_2,\bar\delta)$, the problem of finding either a hyperplane separating the solution from the feasible set of $\mathcal{D}_B$ or proving that no such hyperplane exists can be solved in polynomial time.
	Along the lines of Theorem~\ref{th:poly_ricostruzione}, this implies that \encircle{\textsf{B}} is solvable in polynomial time by the ellipsoid method.
	%
	%
	%
	As the number of dual constraints corresponding to variables $v_i$ is linear, all these constraints can be checked efficiently for violation.
	%
	%
	Besides those, the dual problem $\mathcal{D}_B$ features the following constraint for each $(\theta,\sigma)\in\Theta^\ast\times\Sigma$:
	
	\begin{scriptsize}
	\begin{equation*}
\sum_{i\in\R}\sum_{q\in\xi(\sigma)}U_i(\theta,q)\bar\alpha_i+\frac{\bar\delta(\theta)}{\mu^\ast(\theta)}-\sum_{q\in\xi(\sigma)}\hspace{-.1cm}U_S(\theta,q)-\hspace{-.3cm}\sum_{q\in Q_1\times\xi(\sigma_2)}\hspace{-.2cm}U_1(\theta,q)\bar\beta_1(q_1)
-\hspace{-.3cm}\sum_{q\in\xi(\sigma_1)\times Q_2}\hspace{-.3cm}U_2(\theta,q)\bar\beta_2(q_2)\geq 0.
	\end{equation*}
	\end{scriptsize}

	Given $(\bar\alpha_1,\bar\alpha_2,\bar\beta_1,\bar\beta_2,\bar\delta)$, the \emph{separation problem} of finding a maximally violated inequality of $\mathcal{D}_B$ reads:
	
	\begin{scriptsize}
	\begin{equation*}
\min_{\theta,\sigma}\left\{\sum_{q\in\xi(\sigma)}\hspace{-.1cm}\left[\sum_{i\in\R}U_i(\theta,q)-U_S(\theta,q)\right]+\frac{\bar\delta(\theta)}{\mu^\ast(\theta)}\right.-\hspace{-.1cm}\sum_{q\in Q_1\times\xi(\sigma_2)}\hspace{-.5cm}U_1(\theta,q)\bar\beta_1(q_1)
\left.-\hspace{-.1cm}\sum_{q\in \xi(\sigma_1)\times Q_2}\hspace{-.5cm}U_2(\theta,q)\bar\beta_2(q_2)\right\}.
	\end{equation*}
	\end{scriptsize}
	
	
	A pair $(\theta,\sigma)$ yielding a violated inequality exists iff the separation problem admits an optimal solution of value $<0$.
	If such a $(\theta,\sigma)$ exists, it can be determined in polynomial time by enumerating over the (polynomially many) $(\theta,z)\in\Theta^\ast\times \hat Z$, where $\hat Z$ is the outcomes set of $\hat\Gamma$.
	For each pair $(\theta,z)$, we look for a $\sigma\in\Sigma$ which, together with some actions of \textsf{N}, minimizes the objective function of the separation problem and could lead to $z$.
	The procedure halts as soon as a plan $\sigma$ such that $(\theta,\sigma)$ yielding a violated inequality is found; if it terminates without finding any, $\mathcal{D}_B$ has been solved.
	First, by fixing a pair $(\theta,z)$ the first two terms of the objective function are completely determined.
	The remaining terms can be minimized independently for each receiver. 
	Let us consider the problem of finding $\sigma_2\in\Sigma_2$ (the other one is solved analogously).
	It reads
	$
	\max_{\sigma_2\in\Sigma_2}\{\sum_{q_1\in Q_1}\sum_{q_2\in\xi(\sigma_2)}U_1(\theta,q_1,q_2)\bar\beta_1(q_1)\}$, subject to the constraint that $\sigma_2$ is an admissible plan for the given $z$ (i.e., given the solution plan, it has to be possible to reach $z$ together with some actions of the other players).
	%
	%
	This problem can be solved in poly-time as shown in Algorithm 1, where $\I_i^z$ and $Q_i^z$ are, respectively, the set of infosets and sequences of $i$ encountered on the path from the root to~$z$.
	%
%
	Once $Q^\ast$ has been determined by visiting each $I\in\I_2$, the corresponding optimal $\sigma_2$ can be built directly.
	%
	%
	As in Corollary~\ref{corollary:poly_sol}, an optimal solution to \encircle{\textsf{B}}  has polynomial support size. 
	Then, it is used to determine an optimal solution to \textsf{OPT-EA} in poly-time.
\end{proof}

\section{Negative Result}\label{subsec:bcce_negative}


We conclude by showing that the approach of Section~\ref{subsec:bcce_positive} cannot be extended to settings where $|\R|>2$ and that, in particular, the border between easy and hard cases coincides with $|\R|=2$.
%
%
Indeed, the fact that computing an optimal CCE for a three-player EFG is \textsf{NP}-hard~\cite[Th. 1.3]{vonStengel2008} directly implies the following:
%
%
\begin{restatable}{theorem}{thNpHard}
	\textsf{OPT-EA} is \textsf{NP}-hard when $|\R|>2$.
\end{restatable}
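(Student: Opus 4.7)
The plan is to reduce from the \textsf{NP}-hard problem of computing a social-welfare-maximizing CCE in a three-player EFG with imperfect information and perfect recall, whose hardness is established in~\cite[Th.~1.3]{vonStengel2008}. Given such a three-player EFG $\Gamma$, I would construct an instance of \textsf{OPT-EA} with $|\R|=3$ whose three receivers correspond to the three players of $\Gamma$, setting $|\Theta_a|=1$ for every action $a$ so that, by the footnote in Section~\ref{sec:model}, the Bayesian layer is trivial; the prior $\mu_0$ becomes a point mass on a single state $\theta_0$ and the sender holds no payoff-relevant private information. The receivers' utilities $u_i$ are copied verbatim from $\Gamma$, while the sender's utility $u_S$ is chosen to equal the objective of the underlying CCE problem (e.g., social welfare).

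Under this reduction, a signaling scheme $\varphi:\Theta\to\Delta^{|\Sigma|}$ reduces to a single distribution $\varphi_{\theta_0}\in\Delta^{|\Sigma|}$ over joint plans. The ex ante persuasiveness constraints of Definition~\ref{def:ex_ante} specialize exactly to the constraints characterizing a (non-Bayesian) CCE, as already noted in the footnote following that definition. The sender's expected utility coincides with the expectation of the CCE objective under $\varphi_{\theta_0}$, so optima of the constructed \textsf{OPT-EA} instance are in value-preserving bijection with optima of the CCE problem on $\Gamma$. For arbitrary $|\R|>3$, the reduction extends immediately by appending dummy receivers, each with a single action and constant utility, which affects neither the set of persuasive schemes nor the sender's payoff.

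Because the reduction is polynomial in the input, a polynomial-time algorithm for \textsf{OPT-EA} with $|\R|>2$ would solve optimal three-player CCE computation in polynomial time, contradicting~\cite[Th.~1.3]{vonStengel2008} unless $\textsf{P}=\textsf{NP}$. There is no substantial obstacle to overcome: the only item requiring explicit verification is that the ex ante constraints collapse precisely onto standard CCE constraints when $\mu_0$ is a point mass and every $\Theta_a$ is a singleton, which is an immediate simplification of Definition~\ref{def:ex_ante}. This is why the authors phrase the theorem as a direct implication of the cited hardness result.
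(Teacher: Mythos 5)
Your reduction is exactly the one the paper uses: set $|\Theta_a|=1$ for all actions so the Bayesian layer trivializes, identify the three receivers with the players of the three-player EFG, and invoke the \textsf{NP}-hardness of optimal CCE computation from~\cite[Th.~1.3]{vonStengel2008}; your additional details (the point-mass prior, the collapse of the \emph{ex ante} constraints to standard CCE constraints, and the dummy receivers for $|\R|>3$) merely spell out what the paper leaves implicit. The proposal is correct and takes essentially the same approach as the paper.
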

%
In terms of $\mathcal{D}_B$, as a consequence of the equivalence between optimization and separation~\cite{grotschel2012geometric} the previous results maps in the following:
%
\begin{restatable}{theorem}{thDNpHard}
	Computing an optimal solution to $\mathcal{D}_B$ is \textsf{NP}-hard when $|\R|>2$.
\end{restatable}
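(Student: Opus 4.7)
The plan is to proceed by contradiction, leveraging the previous NP-hardness result for \textsf{OPT-EA} together with the Grötschel--Lovász--Schrijver equivalence between optimization and separation. Suppose, for the sake of contradiction, that there exists a polynomial-time algorithm $\mathcal{A}$ that computes an optimal solution to $\mathcal{D}_B$ when $|\R|>2$. I would then construct, on top of $\mathcal{A}$, a polynomial-time algorithm for \textsf{OPT-EA}, contradicting the previous theorem.

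\textbf{Key steps.} First, I would observe that $\mathcal{D}_B$ is a linear program whose feasible region $F$ is described by (i) a polynomial number of explicit constraints associated with the variables $v_i$, and (ii) the family of exponentially-many inequalities displayed in the proof of Theorem~\ref{th:positive}, one per pair $(\theta,\sigma) \in \Theta^\ast \times \Sigma$. Second, I would invoke the equivalence between optimization and separation of Grötschel, Lovász, and Schrijver~\cite{grotschel2012geometric}: the existence of the hypothetical poly-time algorithm $\mathcal{A}$ optimizing arbitrary linear functions over $F$ would imply the existence of a poly-time separation oracle for $F$ (the relevant polynomiality-in-encoding-size conditions are satisfied because the entries of the matrices $U_i$, the prior $\mu^\ast$, and the set $\Theta^\ast$ all have polynomial encoding size, as established in the proof of Theorem~\ref{th:positive}). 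Third, I would plug this separation oracle back into the ellipsoid-method argument already developed in the proof of Theorem~\ref{th:positive}: exactly as done there, a poly-time separation oracle for the feasible region of $\mathcal{D}_B$ allows one to solve the primal LP \encircle{\textsf{B}} in polynomial time. Finally, recalling that an optimal solution $\gamma^\ast$ to \encircle{\textsf{B}} yields, in polynomial time, an optimal solution to \textsf{OPT-EA} via the signaling scheme $\sigma \mapsto \gamma^\ast(\theta,\sigma)/\mu^\ast(\theta)$, the assumed poly-time solvability of $\mathcal{D}_B$ would translate into a poly-time algorithm for \textsf{OPT-EA}, contradicting the previous theorem.

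\textbf{Main obstacle.} The delicate point is not the conceptual chain---which essentially retraces, in reverse, the positive argument of Theorem~\ref{th:positive}---but rather the verification that the Grötschel--Lovász--Schrijver machinery applies cleanly to $\mathcal{D}_B$. Concretely, one must check that the feasible set $F$ is a well-bounded polyhedron in the sense of~\cite{grotschel2012geometric}, i.e., that the encoding length of its vertices is polynomially bounded in the input, and that the recoverability of a separating hyperplane from an optimization oracle does not suffer from numerical blow-up. Once these standard technical conditions are validated using the same polynomial bounds on $|\Theta^\ast|$, $|Q_i|$, and the numerical entries of $U_i$ that were used in Section~\ref{subsec:bcce_positive}, the reduction goes through and the \textsf{NP}-hardness of $\mathcal{D}_B$ for $|\R|>2$ follows immediately from the previous theorem.
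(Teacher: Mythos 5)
Your route is viable but genuinely different from the paper's appendix proof. You transfer hardness from \textsf{OPT-EA} (the previous theorem) to $\mathcal{D}_B$ through LP duality: a solver for $\mathcal{D}_B$ would, via the optimization--separation equivalence of~\cite{grotschel2012geometric} and the ellipsoid machinery already set up in Theorem~\ref{th:positive}, let one recover an optimal $\gamma^\ast$ for \encircle{\textsf{B}} and hence solve \textsf{OPT-EA}. The paper instead works directly on the exponential family of dual constraints: it exhibits a concrete family of three-receiver instances (take $\bar\delta=0$, $U_S=U_1$, and $U_2=U_3=0$ with $\Theta^\ast$ a singleton) in which finding a maximally violated constraint of $\mathcal{D}_B$ reduces to $\argmax_{\sigma_2,\sigma_3}\sum_{q\in Q_1\times\xi(\sigma_2)\times\xi(\sigma_3)}U_1(\theta,q)\bar\beta_1(q_1)$, i.e., exactly the joint best-response problem of receivers 2 and 3, which is \textsf{NP}-hard by~\cite{vonStengel2008}; the equivalence between optimization and separation then lifts hardness of separation to hardness of solving $\mathcal{D}_B$. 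The paper's construction buys an explanation of \emph{where} the hardness lives---the per-receiver decomposition of the separation oracle used for two receivers breaks because two receivers must now be best-responded jointly---while your argument gets the statement essentially for free from the previous theorem (and, incidentally, matches the informal framing the paper gives in the main text right before the theorem).

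Two caveats on your version. First, the step ``a poly-time algorithm for $\mathcal{D}_B$ yields a poly-time separation oracle for its feasible region'' invokes the GLS equivalence, which requires a \emph{strong} optimization oracle accepting arbitrary rational objectives over the dual polyhedron; your hypothesis only provides a solver for $\mathcal{D}_B$ with its fixed, instance-determined objective (the coefficients $\mu^\ast(\theta)$). The paper's own use of the equivalence is loose in the symmetric way, so this does not put you below the paper's level of rigor, but it is the weakest link in your chain. You can sidestep it, together with the whole ellipsoid/primal-recovery detour: by strong duality, any optimal solution to $\mathcal{D}_B$ already reveals the optimum of \encircle{\textsf{B}}, i.e., the optimal sender utility, and computing (or threshold-deciding) that value is precisely what the reduction behind the previous theorem makes \textsf{NP}-hard. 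Second, state the argument as a Turing reduction (\textsf{OPT-EA} reduces to solving $\mathcal{D}_B$) rather than a ``contradiction'': a polynomial-time algorithm for an \textsf{NP}-hard problem contradicts nothing unless \textsf{P}$\neq$\textsf{NP} is assumed; the reduction itself is what establishes \textsf{NP}-hardness of $\mathcal{D}_B$.
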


\section{Discussion}
We have studied persuasion in the multi-receiver setting with private signals, introducing, for the first time, a model encompassing receivers with sequential interactions, as well as the notion of \emph{ex ante} persuasiveness.
In contrast with previous complexity results on computing optimal CCEs and optimal \emph{ex interim} persuasive schemes, we have shown that with $|\R|\leq 2$ an optimal \emph{ex ante} scheme can be computed in polynomial time with the ellipsoid method by relying on a polynomial-time separation oracle.
%
We have also shown that $|\R|=2$ constitutes the border between easy and hard cases as, even for $|\R|=3$, the problem is \textsf{NP}-hard.









\clearpage
\bibliographystyle{plainnat}
\bibliography{ref}

\begin{thebibliography}{36}
\providecommand{\natexlab}[1]{#1}
\providecommand{\url}[1]{\texttt{#1}}
\expandafter\ifx\csname urlstyle\endcsname\relax
  \providecommand{\doi}[1]{doi: #1}\else
  \providecommand{\doi}{doi: \begingroup \urlstyle{rm}\Url}\fi

\bibitem[Alonso and C{\^a}mara(2016)]{alonso2016persuading}
R.~Alonso and O.~C{\^a}mara.
\newblock Persuading voters.
\newblock \emph{AM ECON REV}, 106\penalty0 (11):\penalty0 3590--3605, 2016.

\bibitem[Arieli and Babichenko(2016)]{arieli2016private}
I.~Arieli and Y.~Babichenko.
\newblock Private {Bayesian} persuasion.
\newblock \emph{Available at SSRN 2721307}, 2016.

\bibitem[Babichenko and Barman(2016)]{babichenko2016computational}
Y.~Babichenko and S.~Barman.
\newblock Computational aspects of private {Bayesian} persuasion.
\newblock \emph{arXiv:1603.01444}, 2016.

\bibitem[Badanidiyuru et~al.(2018)Badanidiyuru, Bhawalkar, and
  Xu]{badanidiyuru2018targeting}
A.~Badanidiyuru, K.~Bhawalkar, and H.~Xu.
\newblock Targeting and signaling in ad auctions.
\newblock In \emph{ACM-SIAM SODA}, pages 2545--2563. SIAM, 2018.

\bibitem[Bardhi and Guo(2018)]{bardhi2018modes}
A.~Bardhi and Y.~Guo.
\newblock Modes of persuasion toward unanimous consent.
\newblock \emph{THEOR ECON}, 13\penalty0 (3):\penalty0 1111--1149, 2018.

\bibitem[Bergemann and Morris(2013)]{bergemann2013robust}
D.~Bergemann and S.~Morris.
\newblock Robust predictions in games with incomplete information.
\newblock \emph{ECONOMETRICA}, 81\penalty0 (4):\penalty0 1251--1308, 2013.

\bibitem[Bergemann and Morris(2016{\natexlab{a}})]{bergemann2016bayes}
D.~Bergemann and S.~Morris.
\newblock Bayes correlated equilibrium and the comparison of information
  structures in games.
\newblock \emph{THEOR ECON}, 11\penalty0 (2):\penalty0 487--522,
  2016{\natexlab{a}}.

\bibitem[Bergemann and Morris(2016{\natexlab{b}})]{bergemann2016information}
D.~Bergemann and S.~Morris.
\newblock Information design, {Bayesian} persuasion, and {Bayes} correlated
  equilibrium.
\newblock \emph{AM ECON REV}, 106\penalty0 (5):\penalty0 586--91,
  2016{\natexlab{b}}.

\bibitem[Caragiannis et~al.(2015)Caragiannis, Kaklamanis, Kanellopoulos,
  Kyropoulou, Lucier, Renato, Tardos, et~al.]{caragiannis2015bounding}
I.~Caragiannis, C.~Kaklamanis, P.~Kanellopoulos, M.~Kyropoulou, B.~Lucier, P.L.
  Renato, E.~Tardos, et~al.
\newblock Bounding the inefficiency of outcomes in generalized second price
  auctions.
\newblock \emph{J ECON THEORY}, 156\penalty0 (C):\penalty0 343--388, 2015.

\bibitem[Dughmi(2017)]{dughmi2017algorithmic}
S.~Dughmi.
\newblock Algorithmic information structure design: a survey.
\newblock \emph{ACM SIGEC EX}, 15\penalty0 (2):\penalty0 2--24, 2017.

\bibitem[Dughmi(2018)]{dughmi2018hardness}
S.~Dughmi.
\newblock On the hardness of designing public signals.
\newblock \emph{GAME ECON BEHAV}, 2018.

\bibitem[Dughmi and Xu(2016)]{dughmi2016algorithmic}
S.~Dughmi and H.~Xu.
\newblock Algorithmic {Bayesian} persuasion.
\newblock In \emph{ACM STOC}, pages 412--425. ACM, 2016.

\bibitem[Dughmi et~al.(2014)Dughmi, Immorlica, and Roth]{dughmi2014constrained}
S.~Dughmi, N.~Immorlica, and A.~Roth.
\newblock Constrained signaling in auction design.
\newblock In \emph{ACM-SIAM SODA}, pages 1341--1357, 2014.

\bibitem[Emek et~al.(2012)Emek, Feldman, Gamzu, Leme, and
  Tennenholtz]{emek2012}
Y.~Emek, M.~Feldman, I.~Gamzu, R.~Paes Leme, and M.~Tennenholtz.
\newblock Signaling schemes for revenue maximization.
\newblock In \emph{ACM EC}, pages 514--531, 2012.

\bibitem[Forges(1993)]{Forges1993}
F.~Forges.
\newblock Five legitimate definitions of correlated equilibrium in games with
  incomplete information.
\newblock \emph{THEOR DECIS}, 35\penalty0 (3):\penalty0 277--310, Nov 1993.

\bibitem[Goldstein and Leitner(2018)]{goldstein2018stress}
I.~Goldstein and Y.~Leitner.
\newblock Stress tests and information disclosure.
\newblock \emph{J ECON THEORY}, 177:\penalty0 34--69, 2018.

\bibitem[Gr{\"o}tschel et~al.(2012)Gr{\"o}tschel, Lov{\'a}sz, and
  Schrijver]{grotschel2012geometric}
M.~Gr{\"o}tschel, L.~Lov{\'a}sz, and A.~Schrijver.
\newblock \emph{Geometric algorithms and combinatorial optimization}.
\newblock Springer Science \& Business Media, 2012.

\bibitem[Harsanyi(1967)]{harsanyi1967games}
John~C Harsanyi.
\newblock Games with incomplete information played by “bayesian” players.
\newblock \emph{MANAGE SCI}, 14\penalty0 (3):\penalty0 159--182, 320--334,
  486--502, 1967.

\bibitem[Hartline et~al.(2015)Hartline, Syrgkanis, and Tardos]{hartline2015no}
J.~Hartline, V.~Syrgkanis, and E.~Tardos.
\newblock No-regret learning in {Bayesian} games.
\newblock In \emph{NeurIPS}, pages 3061--3069, 2015.

\bibitem[Kamenica and Gentzkow(2011)]{kamenica2011bayesian}
E.~Kamenica and M.~Gentzkow.
\newblock Bayesian persuasion.
\newblock \emph{AM ECON REV}, 101\penalty0 (6):\penalty0 2590--2615, 2011.

\bibitem[Kamenica(2018)]{kamenica2018bayesian}
Emir Kamenica.
\newblock Bayesian persuasion and information design.
\newblock \emph{ANNU REV ECON}, 11, 2018.

\bibitem[Kaplan and Zamir(2000)]{kaplan2000strategic}
Todd~R Kaplan and Shmuel Zamir.
\newblock The strategic use of seller information in private-value auctions.
\newblock \emph{Hebrew University, Center For Rationality Working Paper},
  \penalty0 (221), 2000.

\bibitem[Khachiyan(1980)]{khachiyan1980}
L.~G. Khachiyan.
\newblock Polynomial algorithms in linear programming.
\newblock \emph{USSR Computational Mathematics and Mathematical Physics},
  20\penalty0 (1):\penalty0 53--72, 1980.

\bibitem[Koller et~al.(1996)Koller, Megiddo, and {von~Stengel}]{koller1996}
D.~Koller, N.~Megiddo, and B.~{von~Stengel}.
\newblock Efficient computation of equilibria for extensive two-person games.
\newblock \emph{GAME ECON BEHAV}, 14\penalty0 (2):\penalty0 247--259, 1996.

\bibitem[Leme et~al.(2012)Leme, Syrgkanis, and Tardos]{leme2012sequential}
Renato~Paes Leme, Vasilis Syrgkanis, and {\'E}va Tardos.
\newblock Sequential auctions and externalities.
\newblock In \emph{ACM-SIAM SODA}, pages 869--886. SIAM, 2012.

\bibitem[Maschler et~al.(2013)Maschler, Solan, and
  Zamir]{maschler_solan_zamir_2013}
M.~Maschler, E.~Solan, and S.~Zamir.
\newblock \emph{Game Theory}.
\newblock Cambridge University Press, 2013.

\bibitem[Myerson(1979)]{myerson79}
R.B. Myerson.
\newblock Incentive compatibility and the bargaining problem.
\newblock \emph{ECONOMETRICA}, 47\penalty0 (1):\penalty0 61--73, 1979.

\bibitem[Rabinovich et~al.(2015)Rabinovich, Jiang, Jain, and
  Xu]{rabinovich2015information}
Z.~Rabinovich, A.X. Jiang, M.~Jain, and H.~Xu.
\newblock Information disclosure as a means to security.
\newblock In \emph{AAMAS}, pages 645--653, 2015.

\bibitem[Rayo and Segal(2010)]{rayo2010optimal}
L.~Rayo and I.~Segal.
\newblock Optimal information disclosure.
\newblock \emph{J POLIT ECON}, 118\penalty0 (5):\penalty0 949--987, 2010.

\bibitem[Shapley and Snow(1950)]{shapley1950basic}
L.S. Shapley and R.N. Snow.
\newblock Basic solutions of discrete games.
\newblock \emph{Contributions to the Theory of Games}, 1:\penalty0 27--35,
  1950.

\bibitem[Taneva(2019)]{taneva2018}
I.~Taneva.
\newblock Information design.
\newblock \emph{AM ECON J-MICROECON}, 2019.
\newblock Forthcoming.

\bibitem[{von~Stengel}(1996)]{von1996}
B.~{von~Stengel}.
\newblock Efficient computation of behavior strategies.
\newblock \emph{GAME ECON BEHAV}, 14\penalty0 (2):\penalty0 220--246, 1996.

\bibitem[{von~Stengel} and Forges(2008)]{vonStengel2008}
B.~{von~Stengel} and F.~Forges.
\newblock Extensive-form correlated equilibrium: Definition and computational
  complexity.
\newblock \emph{MATH OPER RES}, 33\penalty0 (4):\penalty0 1002--1022, 2008.

\bibitem[Wang(2013)]{wang2013bayesian}
Yun Wang.
\newblock Bayesian persuasion with multiple receivers.
\newblock \emph{Available at SSRN 2625399}, 2013.

\bibitem[Xu et~al.(2015)Xu, Rabinovich, Dughmi, and Tambe]{xu2015exploring}
H.~Xu, Z.~Rabinovich, S.~Dughmi, and M.~Tambe.
\newblock Exploring information asymmetry in two-stage security games.
\newblock In \emph{AAAI}, pages 1057--1063, 2015.

\bibitem[Xu et~al.(2016)Xu, Freeman, Conitzer, Dughmi, and
  Tambe]{xu2016signaling}
H.~Xu, R.~Freeman, V.~Conitzer, S.~Dughmi, and M.~Tambe.
\newblock Signaling in {Bayesian} {Stackelberg} games.
\newblock In \emph{AAMAS}, pages 150--158, 2016.

\end{thebibliography}

\clearpage
\appendix

\section{Omitted Proofs}\label{sec:omitted}

\lpRicostruzione*
\begin{proof}
	Consider a behavioral strategy $\pi^\ast$ whose realization-equivalent realization plan is denoted by $r^\ast$.
	Since player $i$ has perfect recall, there always exists at least a mixed strategy $\hat x\in\X_i$ realization equivalent to $\pi^\ast$~\cite[Th. 6.11]{maschler_solan_zamir_2013}.
	Therefore, the optimal value of \encircle{$\mathsf{A}$} is 1 (since $\mathds{1}^{\top}\hat x=1$).
	Given $\hat x\in\X_i$, a distribution assigning to each sequence $q\in Q_i$ value $\sum_{\sigma\in\Sigma_i: q\in\xi(\sigma)}\hat x_{\sigma}$ is a valid realization plan.
	Therefore, if $x\in\Delta^{|\Sigma_i|}$, then $Mx$ is a well defined realization plan for $i$.
	Now, by contradiction, assume that $x^\ast$ is an optimal solution of \encircle{$\mathsf{A}$} and that there exists $q'\in Q_i$ such that $M_{q'}x^\ast<r^\ast(q')$.
	Optimality implies $\mathds{1}^\top x^\ast=1$ and, therefore, $x^\ast\in\Delta^{|\Sigma_i|}$. 
	Let $Mx^\ast=r$. We have $r(q')<r^\ast(q')$. Since the sequence-form constraints hold, there must exist at least one $q''\in Q_i$ such that $r(q'')>r^\ast(q'')$. 
	This leads to a contradiction since $x^\ast$ would not be a feasible solution.
\end{proof}

\thRicostruzione*
\begin{proof}
	Since $Mx^\ast=r^\ast$ (Lemma~\ref{lemma:lp_ricostruzione}), we have $M_{q_{\emptyset}}x^\ast=r^\ast(q_\emptyset)$, that is $\mathds{1}^\top x^\ast=1$. Therefore, $x^\ast\in\X_i$.
	Realization equivalence follows from Lemma~\ref{lemma:lp_ricostruzione} and from the fact that $Mx^\ast$ defines a valid realization plan. 
	Moreover, LP \encircle{$\mathsf{A}$} admits a basic optimal solution with at most $|Q_i|$ variables with strictly positive values~\cite{shapley1950basic}. 
	Then, there exists an optimal $x^\ast$ with support of polynomial size. 
\end{proof} 

\lemmaDual*
\begin{proof}
	Let $\alpha\in\mathbb{R}^{|Q_i|}$ be the vector of dual variables (corresponding to constraints $Mx\leq r^\ast$). 
	$\mathcal{D}$ is an LP with a polynomial number of variables ($|Q_i|$) and an exponential number of constraints ($|\Sigma_i|$).
	We show that, given $\bar \alpha\in\mathbb{R}^{|Q_i|}$, the problem of finding a hyperplane separating $\bar\alpha$ from the set of feasible solutions to $\mathcal{D}$ or proving that no such hyperplane exists can be solved in polynomial time. 
	The problem amounts to determining whether there exists a violated (dual) constraint $M^\top_\sigma \bar\alpha\geq1$ for some $\sigma\in\Sigma_i$.
	Given $\bar\alpha$, the \emph{separation problem} of finding one such constraint of maximum violation reads: $\min_{\sigma\in\Sigma_i}\{M_{\sigma}^\top\bar\alpha\}$.
	A plan $\sigma$ yielding a violated constraint exists iff the separation problem admits an optimal solution of value $<1$.	
	One such plan (if any) can be found in polynomial time by reasoning in a backward induction fashion, starting from information sets of $i$ originating only terminal sequences, and proceeding backwards. At each $I\in\I_i$, player $i$ selects $\hat q\in Q(I)$ such that
	$
	\hat q \in\argmin_{q\in Q(I)}\{\sum_{I'\in I_\downarrow(q)}w_{I'}+\bar\alpha(q)\},
	$
	and subsequently sets $w_I:=\sum_{I'\in I_\downarrow(\hat q)}w_{I'}+\bar\alpha(\hat q)$. 
	This procedure requires a computing time linear in $|\I_i|$. 
	Then, a maximally violated inequality can be found by building a plan according to the sequences determined at the previous step.
	%
\end{proof}

\thPolyRec*
\begin{proof}
	Due to the equivalence between optimization and separation~\cite{grotschel2012geometric}, since the separation problem for $\mathcal{D}$ can be solved in polynomial time one can solve $\mathcal{D}$ in polynomial time via the ellipsoid method~\cite{khachiyan1980}.
	As the ellipsoid method solves a primal-dual system encompassing both $\mathcal{D}$ and~\encircle{$\mathsf{A}$}, it also produces a solution to~\encircle{$\mathsf{A}$}.
\end{proof}

\corPolySol*
\begin{proof}
	First, the ellipsoid method returns an optimal solution $x^\ast$ with polynomial support size (say $|\textnormal{supp}(x^\ast)|=m$). 
	This is because the number of iterations is polynomial and, by adding a new inequality to the dual per iteration, we also add a new variable to the primal per each iteration.
	If $x^\ast$ is a basic feasible solution, its support is necessarily the smallest possible and we can halt the procedure.
	If not, $x^\ast$ belongs to the relative interior of a face of the polytope defined by \encircle{\textsf{A}}.
	Suppose to have  $\{e_jx\}_{j=1}^m=\textnormal{supp}(x)$, where $e_j$ is the canonical vector with a single 1 in position $j$.
	To obtain an optimal basic-feasible solution, we proceed as follows.
	Let $j:=1$.
	First, we restrict \encircle{\textsf{A}} to the optimal face by adding the constraint $\mathds{1}^\top x=\mathds{1}^\top x^\ast$.
	Then, we reoptimize the LP maximizing the objective function $e_j x$.
	If we do not obtain a basic-feasible solution, we iterate the procedure adding the constraint $e_j x = e_j x^*$ and letting $j := j + 1$.
	The dimension of the LP is reduced by 1 at each iteration (and the number of steps is polynomial, as we have one for each of the polynomially-many variables in $\textnormal{supp}(x^\ast)$). 
	This leads to optimizing over faces of \encircle{$\mathsf{A}$} of increasingly smaller dimension.
	When the dimension reaches 1, the corresponding solution is necessarily a basic one.
\end{proof}

\thNpHard*
\begin{proof}
	Let $|\R|=3$ and, $\forall a\in A$, $|\Theta_a|=1$. 
	Then, the problem amounts to computing an optimal CCE for a three player EFG, which is \textsf{NP}-hard since the reduction of~\cite[Th. 1.3]{vonStengel2008} directly applies.
\end{proof}

\thDNpHard*
\begin{proof}
	Consider the case in which $\R=3$, and $\mathcal{D}_B$ is adapted accordingly.
	%
	%
	Let $q=(q_1,q_2,q_3)$.
	Given dual variables $(\bar\alpha_1,\bar\alpha_2,\bar\alpha_3,\bar\beta_1,\bar\beta_2,\bar\beta_3,\bar\delta)$, the separation problem reads: 
	
	\begin{scriptsize}
		\begin{equation*}
		\min_{\theta,\sigma}\left\{\sum_{q\in\xi(\sigma)}\left[\sum_{i\in\R}U_i(\theta,q)-U_S(\theta,q)\right]+\frac{\bar\delta(\theta)}{\mu^\ast(\theta)}\right.
		\left.\hspace{.1cm}-\hspace{-.5cm}\sum_{q\in Q_1\times\xi(\sigma_2)\times\xi(\sigma_3)}
		\hspace{-.9cm}U_1(\theta,q)\bar\beta_1(q_1)
		-\hspace{-.9cm}\sum_{q\in \xi(\sigma_1)\times Q_2 \times\xi(\sigma_3)}\hspace{-.9cm}U_2(\theta,q)\bar\beta_2(q_2)
		-\hspace{-.9cm}\sum_{q\in \xi(\sigma_1)\times\xi(\sigma_2)\times Q_3}\hspace{-.9cm}U_3(\theta,q)\bar\beta_3(q_3)\right\}.
		\end{equation*}
	\end{scriptsize}
	
	
	Consider a setting with the following features: $\forall\theta\in\Theta^\ast$, $\bar\delta(\theta)=0$ (a valid assumption since $\delta\in\mathbb{R}^{|\Theta^\ast|})$; $\forall(\theta,q)\in\Theta^\ast\times\left(\bigtimes_{i\in\R}Q_i\right)$, $U_S(\theta,q)=U_1(\theta,q)$;
	$\forall (\theta,q)\in\Theta^\ast\times\left(\times_ {i\in\R}Q_i\right)$, $U_2(\theta,q)=U_3(\theta,q)=0$.
	Then, finding a maximally violated inequality corresponds to solving:
	$
	\argmax_{\theta,\sigma_2,\sigma_3}\{\sum_{q\in Q_1\times\xi(\sigma_2)\times\xi(\sigma_3)}U_1(\theta,q)\bar\beta_1(q_1)\}.
	$
	Let $U'_1\in\mathbb{R}^{|\Theta^\ast|\times|Q_2|\times|Q_3|}$ be such that, for each $(\theta,q_2,q_3)$, $U'_1(\theta,q_2,q_3)=\sum_{q_1\in Q_1}U_1(\theta,q_1,q_2,q_3)\bar\beta(q_1)$.
	If $\Theta^\ast$ is a singleton, the problem becomes 
	$
	\argmax_{\sigma_2,\sigma_3}\{\sum_{(q_2,q_3)\in\xi(\sigma_2)\times\xi(\sigma_3)}U'_1(q_1,q_2)\}.
	$
	This is a joint best-response problem between receivers 2 and 3, which is known to be \textsf{NP}-hard~\cite{vonStengel2008}.
	Therefore, the separation problem for constraints corresponding to primal variables $\gamma$ is \textsf{NP}-hard.
	Due to the equivalence between optimization and separation~\cite{grotschel2012geometric}, it follows that it is \textsf{NP}-hard to solve $\mathcal{D}_B$.
\end{proof}

\end{document}